\definecolor{cvprblue}{rgb}{0.21,0.49,0.74}
\title{A Flag Decomposition for Hierarchical Datasets}
\author{Nathan Mankovich\\
Universitat de València
\and
Ignacio Santamaria\\
Universidad de Cantabria
\and
Gustau Camps-Valls\\
Universitat de València
\and
Tolga Birdal\\
Imperial College London
}
\newcommand{\R}{\ensuremath{\mathbb{R}}}  
\newcommand{\Gr}{\ensuremath{\mathrm{Gr}}}  
\newcommand{\tr}{\mathrm{tr}}
\newcommand{\Y}{\mathbf{Y}}
\newcommand{\X}{\mathbf{X}}
\newcommand{\x}{\mathbf{x}}
\newcommand{\U}{\mathbf{U}}
\newcommand{\M}{\mathbf{M}}
\newcommand{\Q}{\mathbf{Q}}
\newcommand{\bR}{\mathbf{R}}
\newcommand{\V}{\mathbf{V}}
\newcommand{\B}{\mathbf{B}}
\newcommand{\C}{\mathbf{C}}
\newcommand{\D}{\mathbf{D}}
\newcommand{\bP}{\mathbf{P}}
\newcommand{\cA}{\mathcal{A}}
\newcommand{\cB}{\mathcal{B}}
\newcommand{\I}{\mathbf{I}}
\newcommand{\flag}{\mathcal{FL}}
\newcommand{\cmark}{\ding{51}}%
\newcommand{\xmark}{\ding{55}}%
\newtheorem{remark}{Remark}
\newtheorem{prop}{Proposition}
\newtheorem{dfn}{Definition}
\newtheorem{exmp}{Example}[section]
\renewcommand{\paragraph}[1]{{\vspace{1mm}\noindent \bf #1}.}
\DeclareMathOperator*{\argmin}{arg\min}
\providecommand{\openbox}{\leavevmode
  \hbox to.77778em{%
  \hfil\vrule
  \vbox to.675em{\hrule width.6em\vfil\hrule}%
  \vrule\hfil}}
\DeclareRobustCommand{\qed}{%
  \ifmmode
    \eqno \def\@badmath{$$}
    \let\eqno\relax \let\leqno\relax \let\veqno\relax
    \hbox{\openbox}%
  \else
    \leavevmode\unskip\penalty9999 \hbox{}\nobreak\hfill
    \quad\hbox{\openbox}%
  \fi
}
\crefname{eq}{eq}{eq}
\Crefname{Eq}{Eq}{Eq}
\crefname{thm}{theorem}{theorem}
\Crefname{Thm}{Theorem}{Theorem}
\crefname{prop}{Prop.}{Prop.}
\crefname{dfn}{Dfn.}{Dfn.}
\Crefname{Prop}{Proposition}{Proposition}
\crefname{remark}{remark}{remark}
\Crefname{Remark}{Remark}{Remark}
\Crefname{algorithm}{Alg.}{Alg.}
\crefname{table}{Tab.}{Tab.}
\Crefname{table}{Tab.}{Tab.}
\newcommand{\algname}{Flag-BMGS}
\begin{document}

\maketitle

\begin{abstract}
Flag manifolds encode nested sequences of subspaces and serve as powerful structures for various computer vision and machine learning applications. Despite their utility in tasks such as dimensionality reduction, motion averaging, and subspace clustering, current applications are often restricted to extracting flags using common matrix decomposition methods like the singular value decomposition. Here, we address the need for a general algorithm to factorize and work with hierarchical datasets. In particular, we propose a novel, flag-based method that decomposes arbitrary hierarchical real-valued data into a hierarchy-preserving flag representation in Stiefel coordinates. Our work harnesses the potential of flag manifolds in applications including denoising, clustering, and few-shot learning.
\end{abstract}


\section{Introduction}\label{sec:intro}
Hierarchical structures are fundamental across a variety of fields: they shape taxonomies and societies~\cite{lane2006hierarchy}, allow us to study 3D objects~\cite{leng2024hypersdfusion}, underpin neural network architectures~\cite{yan2015hd}, and form the backbone of language~\cite{longacre1966hierarchy}. They reflect parts-to-whole relationships~\cite{taher2024representing} and how our world organizes itself compositionally~\cite{salthe1985evolving}.
However, when handling hierarchies in data, we often resort to the temptation to \textit{flatten} them for simplicity, losing essential structure and context in the process. This tendency is evident in standard dimensionality reduction techniques, like principal component analysis, which ignore any hierarchy the data contains.

In this work, we advocate for an approach rooted in flags to preserve the richness of hierarchical linear subspaces. A flag~\cite{Mankovich_2023_ICCV,mankovich2024fun} represents a sequence of nested subspaces with increasing dimensions, denoted by its type or signature $(n_1,n_2,\dots,n_k;n)$, where $n_1$$<$$n_2$$<$$\dots$$<$$n_k$$<$$n$. For instance, a flag of type $(1,2;3)$ describes a line within a plane in $\mathbb{R}^3$. By working in flag manifolds—structured spaces of such nested subspaces—we leverage the full complexity of hierarchical data. Flag manifolds have already shown promise in extending traditional methods like Principal Component Analysis (PCA)~\cite{pennec2018barycentric, mankovich2024fun, szwagier2024curseisotropyprincipalcomponents}, Independent Component Analysis (ICA)~\cite{nishimori2006riemannian,nishimori2006riemannian0,nishimori2006riemannian1,nishimori2007flag,nishimori2008natural}, generalizing subspace learning~\cite{szwagier2025nestedsubspacelearningflags}, and Self-Organizing Maps (SOM)~\cite{ma2022self}. They enable robust representations for diverse tasks: averaging motions~\cite{Mankovich_2023_ICCV}, modeling variations in face illumination~\cite{draper2014flag, Mankovich_2023_ICCV}, parameterizing 3D shape spaces~\cite{ciuclea2023shape}, and clustering subspaces for video and biological data~\cite{marrinan2014finding, mankovich2022flag, Mankovich_2023_ICCV, mankovich2023module}.

\begin{figure}[t!]
    \centering
    \includegraphics[width=\linewidth]{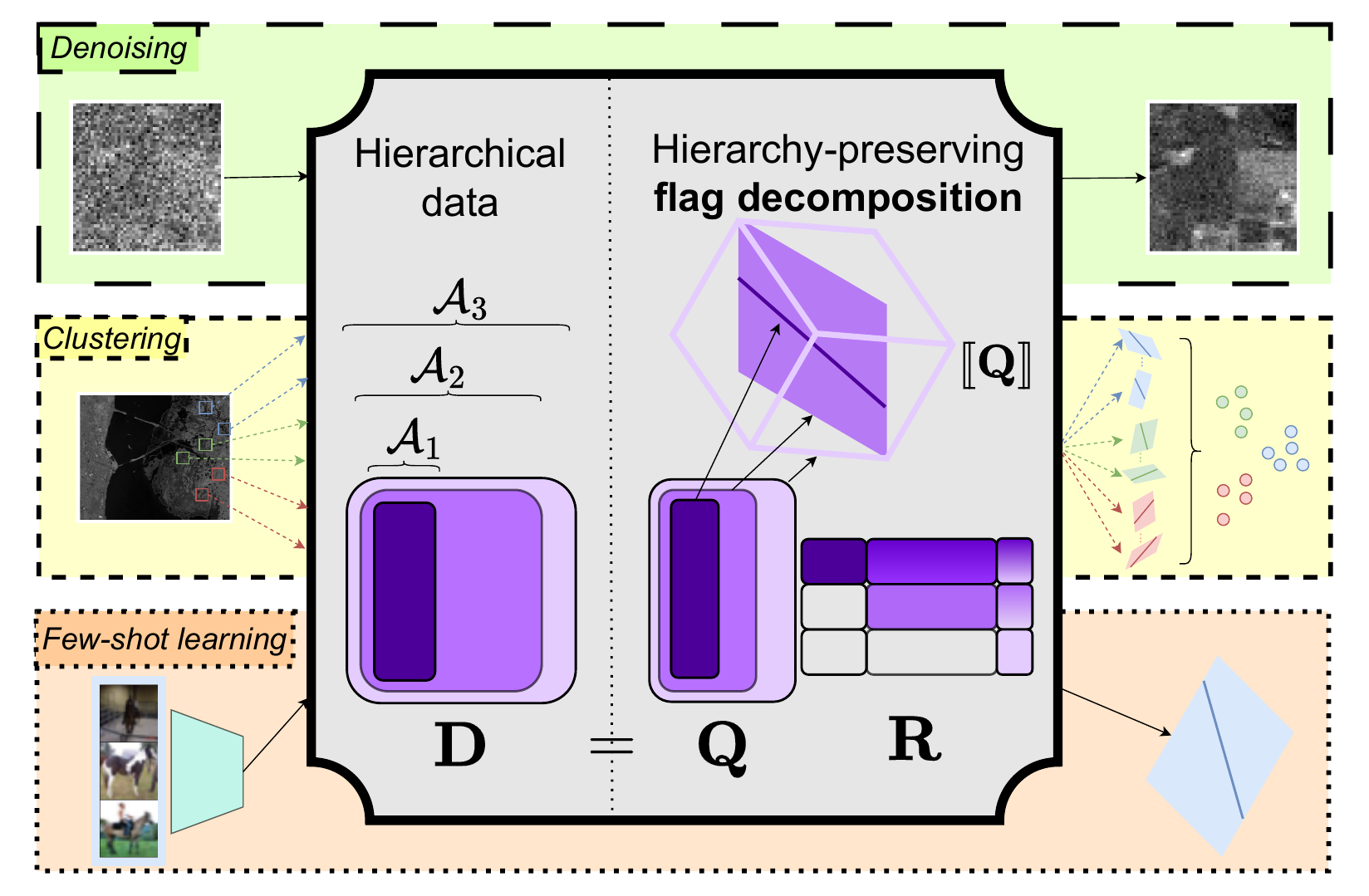}
    \caption{A flag decomposition (center) is used for a hierarchy-preserving flag representation and reconstruction. This decomposition separates a data matrix $\D$ with an associated hierarchy of column indices $\cA_1 \subset \cA_2 \subset \cA_3$ into Stiefel coordinates $\Q$ for a flag $[\![\Q]\!]$, a block-upper triangular matrix $\bR$, and a permutation matrix $\bP$ (not pictured). Example applications 
    include denoising (green), clustering (yellow), and few-shot learning (orange).\vspace{-3mm}}
    \label{fig:concept}
\end{figure}
Our main contribution is a Flag Decomposition (FD) specifically designed to preserve hierarchical structures within data (see~\cref{fig:concept}). First, we formalize the notion of hierarchies in data and the definition of a flag (\S\ref{sec:bg}). Next, we provide a practical algorithm for deriving FDs (\S\ref{sec:methods}) and outline multiple promising applications (\S\ref{sec:apps}). Then we demonstrate its robustness in clustering tasks involving noisy and outlier-contaminated simulated data (\S\ref{sec:results}). Our approach outperforms standard methods, such as Singular Value Decomposition (SVD), in clustering and denoising hyperspectral satellite images. Finally, we show that using flags as prototypes in a few-shot framework improves classification accuracy on benchmark datasets. Final remarks are in (\S\ref{sec:conclusion}) and formal proofs are in the suppl. material. Our implementation is \url{https://github.com/nmank/FD}.

\vspace{-1mm}
\vspace{-1mm}\section{Preliminaries}\vspace{-1mm}\label{sec:bg}
We begin by formalizing hierarchical datasets. Then, build to a definition of flags by providing the necessary background in matrix spaces. For notation, italicized capital letters (\eg, $\cA$) denote sets, and boldface letters denote matrices and column vectors (\eg, $\mathbf{X}$ and $\mathbf{x}_i$). $[\X]$ denotes the subspace spanned by the columns of $\X$.

Consider the data matrix $\D \in \R^{n \times p}$ with a hierarchy defined by the subsets of column indices
\begin{equation}
    \emptyset = \mathcal{A}_0 \subset \mathcal{A}_1 \subset \mathcal{A}_2 \subset \cdots \subset \mathcal{A}_k = \{1,2,\dots,p\}.
\end{equation}
Let $\D_{\cA_i}$ be the matrix containing only columns of $\D$ in $\mathcal{A}_i$.
\begin{dfn}[Column hierarchy for $\D$]\label{def:col_hierarchy}
    We call $\mathcal{A}_1 \subset \mathcal{A}_2 \subset \cdots \subset \mathcal{A}_k$ a column hierarchy for $\D \in \R^{n \times p}$ when
    \begin{equation}
        \mathrm{dim}([\D_{\cA_{i-1}}]) < \mathrm{dim}([\D_{\cA_i}]) \text{ for } i=1,2,\dots,k.
    \end{equation}
\end{dfn}

Given a column hierarchy for $\D$~\footnote{A similar, complex, and well-studied notion of hierarchical matrices is H-matrices~\cite{borm2003introduction}.}, there is a natural correspondence between column and subspace hierarchies
\begin{equation*}
    \begin{matrix}
        \text{columns:} & \cA_1 & \subset & \cA_2 & \subset & \cdots & \subset & \cA_k\\
        \text{subspaces:} & [\D_{\cA_1}] & \subset & [\D_{\cA_2}] & \subset & \cdots & \subset & [\D_{\cA_k}].
    \end{matrix}
\end{equation*}
These hierarchies can include coarse-to-fine neighborhoods (\eg,~\cref{ex:pixel_hierarchy}), spectral hierarchies (\eg,~\cref{ex:spectrum_hierarchy}), and feature representations (\eg,~\cref{ex:feature_hierarchy}).
\begin{exmp}[Neighborhood Hierarchy]\label{ex:pixel_hierarchy}
    Consider $(p_i\times p_i)$ concentric RGB image patches increasing in size with $i=1,2,3$. We store the $i^{\mathrm{th}}$ image patch in $\D \in \R^{3 \times p_i^2}$. $\cA_1$ contains the column indices of the smallest image patch in $\D$, $\cA_2$ contains those of the next smallest patch, and $\cA_3$ the largest patch. This results in the neighborhood column hierarchy $\cA_1 \subset \cA_2 \subset \cA_3$ for the data matrix $\D$.    
\end{exmp}

\begin{exmp}[Spectral Hierarchy]\label{ex:spectrum_hierarchy}
    Let $\D \in \R^{n \times p}$ be a hyperspectral image with $n$ pixels and $p$ bands. A hierarchy is imposed on the bands by grouping wavelengths into: 
    \begin{figure}[H]
        \vspace{-4mm}
        \centering
        \includegraphics[width=\linewidth, trim = 7mm 0mm 4mm 0mm, clip]{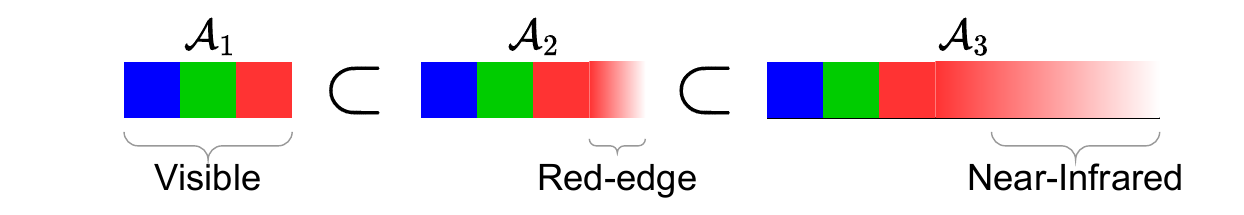}
        \vspace{-8mm}
    \end{figure}
\end{exmp}

\begin{exmp}[Feature Hierarchy]\label{ex:feature_hierarchy}
    Consider a feature extractor (\eg, deep network) admitting the following decomposition: $f_\Theta = f^{(2)}_\Theta \circ f^{(1)}_\Theta: \R^N \rightarrow \R^{n}$ where $f^{(1)}_\Theta:\R^{N} \rightarrow \R^{n}$. $s$ samples $\bm{x}_1,\cdots,\bm{x}_s$ are used to obtain 
    \begin{equation*}
         \mathbf{D} = \left[ f^{(1)}_\Theta(\bm{x}_1)\,|\, \cdots\,|\, f^{(1)}_\Theta(\bm{x}_s)\,|\,f_\Theta(\bm{x}_1)\,|\, \cdots| f_\Theta(\bm{x}_s) \right] .
    \end{equation*}
    Since information flows from $f^{(1)}_\Theta$ to  $f^{(2)}_\Theta$, it is natural to assume that features extracted by $f^{(1)}_\Theta$ span a subspace of the features extracted by $f_{\Theta}$. Therefore, we propose the hierarchy $\{ 1,2,\dots, s\} \subset \{ 1,2,\dots,2s\}$.
\end{exmp}

\noindent Next, we build a mathematical formalization of flags.

\begin{dfn}[Matrix spaces] The \textbf{orthogonal group} $O(n)$ denotes the group of distance-preserving transformations of a Euclidean space of dimension $n$:
\begin{align}
    O(n) := \{ \M \in \R^{n \times n} : \M^\top \M =\M\M^\top = \I\}.
\end{align}
A \textbf{permutation matrix} is a square matrix $\bP \in \R^{n \times n}$ where each column and each row contains only a single $1$ value and the rest are $0$. $\D\bP$ permutes the columns of $\D$.
An important property of permutation matrices is $\bP^{-1} = \bP^\top$.
\noindent The \textbf{Stiefel manifold} $St(k,n)$, a.k.a. the set of all orthonormal $k$-frames in $\R^n$, can be represented as the quotient: $St(k,n) = O(n)/O(n-k)$. A point on the Stiefel manifold is parameterized by a tall-skinny $n \times k$ real matrix with orthonormal columns, \ie $\X \in \R^{n \times k}$ where $\X^\top \X = \I$.
\noindent The \textbf{Grassmannian}, $ Gr(k, n) $ represents the set of all $k $-dimensional subspaces of $ \mathbb{R}^n $. Each point in $ Gr(k, n) $ can be identified with an equivalence class of matrices in the Stiefel manifold, where two matrices are equivalent if their columns span the same subspace. 
We represent $[\X] \in Gr(k,n)$ using the Stiefel coordinates
$\X \in St(k,n)$.
\end{dfn}

\begin{dfn}[Flag]
    Let $\mathcal{V}_i$ be an $n_i$-dimensional subspace of a vector space $\mathcal{V}$ of dimension $n$. A \emph{flag} of type $(n_1,n_2,\dots, n_k;n)$, is the nested sequence of subspaces
    \begin{equation}
        \emptyset \subset \mathcal{V}_1 \subset \mathcal{V}_2 \subset \cdots \subset \mathcal{V}_k \subset \mathcal{V}.
    \end{equation}
\end{dfn}
The \emph{flag manifold} $\flag(n_1,n_2,\dots, n_k;n)$ is a Riemannian manifold and the collection of all flags of type, a.k.a. signature, $(n_1,n_2,\dots, n_k;n)$~\cite{szwagier2023rethinking,ye2022optimization}. The first empty subspace, $\emptyset$ with dimension $n_0 = 0$, is now mentioned for completeness but will be dropped from notation and implicit from here on. In this work, we will work with real flags, namely $\mathcal{V} = \mathbb{R}^n$.

\begin{remark}[Flag manifold as a quotient of groups]
Ye et al.~\cite[Proposition 4.10]{ye2022optimization} prove that $\flag(n_1,\dots,n_k;n)$ is diffeomorphic to the quotient space $St(n_k,n)/(O(m_1) \times O(m_2) \times \cdots \times O(m_{k}))$ where $m_i = n_i - n_{i-1}$. This fact gives a Stiefel manifold coordinate representation of a flag.
\end{remark}

\begin{dfn}[Stiefel coordinates for flags~\cite{ye2022optimization}]
A flag is represented by $\X  = [ \X_1 | \X_2 | \cdots | \X_k ] \in St(n_k,n)$ where $\X_i \in \R^{n \times m_i}$. Specifically, $\X$ represents the flag 
\begin{equation}
    [\![ \X ]\!] = [\X_1] \subset [\X_1, \X_2] \subset \cdots \subset [\X_1, \X_2,\dots \X_k] \subset \R^n
\end{equation}
We say $[\![ \X ]\!]$ is a flag of type (or signature) $(n_1,n_2,\dots, n_k;n)$ and $[\X_1, \X_2,\dots \X_i]$ denotes the span of $[\X_1| \X_2 |\cdots |\X_i]$ (for $i=1,2,\dots,k$).
\end{dfn}

\begin{table*}[t]
    \centering
    \caption{Computing the chordal distance on Steifel, Grassmann, and flag manifolds using matrix representatives. $\|\cdot\|_F$ is Frobenius norm.}
    \setlength{\tabcolsep}{4mm} 
    {%
    \begin{tabular}{l| c | c |c}
        Representation / Manifold & $\X,\Y \in St(n_k,n)$ & $[\X],[\Y] \in Gr(n_k,n)$  &  $[\![\X]\!],[\![\Y]\!] \in \flag(1,\dots,n_k;n)$\\
        \midrule
        Chordal distance & $\|\X - \Y\|_F$ &  $\frac{1}{\sqrt{2}}\|\X\X^\top - \Y\Y^\top\|_F$ & $\sqrt{\frac{1}{2}\sum_{i=1}^k\|\X_i\X_i^\top - \Y_i\Y_i^\top\|_F^2}$
    \end{tabular}
    }
    \label{tab:chordaldist}
\end{table*}

Given the tall-skinny orthonormal matrix representatives $\X,\Y \in \R^{n \times n_k}$, we also utilize their \emph{chordal distances} as given in~\cref{tab:chordaldist}. 
The chordal distance on the Stiefel manifold measures the $2$-norm of the vectorized matrices. In contrast, the Grassmannian chordal distance measures the $2$-norm of the vector of sines of the principal angles~\cite{bjorck1973numerical} between the subspaces through the Frobenius norm of the projection matrices~\cite{edelman1998geometry}. The chordal distance on the flag manifold~\cite{pitaval2013flag} arises from the fact that it is a closed submanifold of $\Gr(m_1,n) \times \cdots \times \Gr(m_k,n)$ as shown by Ye et al.~\cite[Proposition 3.2]{ye2022optimization}. This distance is similar to the Grassmannian chordal distance between subsequent pieces of the flags (\eg, $[\X_i]$ and $[\Y_i]$ for $i=1,\dots,k$).

\section{Flag Decomposition (FD)}\label{sec:methods}
We will now introduce our novel Flag Decomposition (FD) that, given $\D$, outputs a hierarchy-preserving flag $[\![\Q]\!]$. From this point on, $[\cdot,\cdot,\cdot]$ denotes column space and $[\cdot|\cdot|\cdot]$ block matrices. Let $\cB_i = \cA_i \setminus \cA_{i-1}$ be the difference of sets for $i=1,2,\dots, k$ and $\B_i = \D_{\cB_i}$ so that $[\D_{\cA_i}] = [\B_1,\B_2,\dots,\B_i]$. We define the permutation matrix $\bP$ so $\B = [\B_1|\B_2| \cdots | \B_k] = \D\bP$. Also, denote the projection matrix onto the null space of $[\Q_i]$ with $\Q_i \in St(m_i,n)$ as $\boldsymbol{\Pi}_{\Q_i^\perp} = \I - \Q_i\Q_i^\top$. We use $n_0 = 0$, $\cA_0 = \emptyset$, and $\boldsymbol{\Pi}_{\Q_0^\perp} = \I$.

\begin{dfn}[Hierarchy-preserving flags]
     A flag $[\![\X]\!] \in \flag(n_1,n_2,\dots, n_k;n)$ is said to preserve the hierarchy of $\D$ if $[\D_{\cA_i}] = [\X_1,\X_2,\dots,\X_i]\,$ for each $i=1,2,\dots,k$. 
\end{dfn}

If $\mathcal{A}_1 \subset \mathcal{A}_2 \subset \cdots \subset \mathcal{A}_k$ is a column hierarchy for $\D$, then a hierarchy-preserving flag results in the three, equivalent, nested sequences of subspaces
\begin{equation*}
    \begin{matrix}
        [\D_{\cA_1}] & \subset & [\D_{\cA_2}] & \subset & \cdots & \subset & [\D_{\cA_k}]\\    
        [\B_1] & \subset & [\B_1,\B_2] & \subset & \cdots & \subset & [\B_1,\B_2, \dots,\B_k]\\
        [\X_1] & \subset & [\X_1,\X_2] & \subset & \cdots & \subset & [\X_1,\X_2,\dots,\X_k].
    \end{matrix}
\end{equation*}
SVD and QR decomposition can recover flags from data with certain, limited column hierarchies (see suppl. material for details). However, when faced with a more complex column hierarchy, both QR and SVD cannot recover the entire hierarchy-preserving flag (see~\cref{fig:flag cartoon1}).

These examples motivate a generalized decomposition of $\D$ that outputs a hierarchy-preserving flag. In particular, unlike in QR decomposition, $\D$ can be rank-deficient (\eg, $\mathrm{rank}(\D) < p$); and unlike the SVD, we can decompose into flags of type $(n_1,n_2,\dots,n_k;n)$ with $n_k \leq p$. 

\begin{figure}[t]
    \centering
    \includegraphics[width=.95\linewidth]{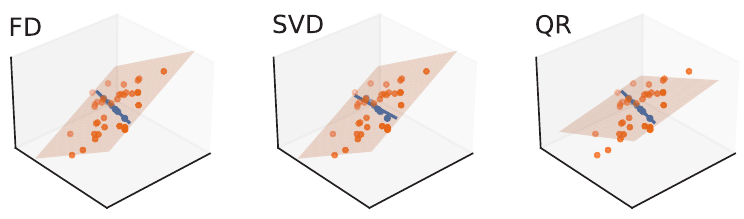}
    \vspace{-1mm}
    \caption{We recover a flag from $\D$ with hierarchy $\cA_1 \subset \cA_2$. Columns of $\D$ are plotted as points with $\cA_1$ in blue and $\cA_2 \setminus \cA_1$ in orange. FD is the only method that recovers the flag (line inside plane). SVD correctly recovers the plane but not the line whereas QR only recovers the line and the plane misses the orange points.}\vspace{-4mm}
    \label{fig:flag cartoon1}
\end{figure}
\begin{dfn}[\textbf{Flag Decomposition (FD)}]
    Let $\D \in \R^{n \times p}$ be data with the hierarchically nested sequence of column indices $\cA_1 \subset \cA_2 \subset \cdots \subset \cA_k$. A flag decomposition of type $(n_1,n_2, \cdots, n_k; n)$ is the matrix factorization
    \begin{equation}
        \D = \Q \bR \bP^\top
    \end{equation}
    where the block structures are
    \begin{align}
        \Q &= [\underbrace{\Q_1}_{n \times m_1} | \underbrace{\Q_2}_{n \times m_2} | \cdots | \underbrace{\Q_k}_{n \times m_k} ] \in \R^{n \times n_k},\\
        \bR &= \begin{bmatrix}
            \bR_{11} & \bR_{12} & \cdots & \bR_{1k}\\
            \mathbf{0} & \bR_{22} & \cdots & \bR_{2k}\\
            \vdots & \vdots & \ddots & \vdots\\
            \mathbf{0} & \mathbf{0} & \cdots & \bR_{kk} \end{bmatrix} \in \R^{n_k \times p}, \\
        \bP &= [\bP_1 \,|\, \bP_2 \,|\, \cdots \,|\, \bP_k] \in \R^{p \times p}.
    \end{align}
    Here, $\Q$ corresponds to the Stiefel coordinates for the hierarchy-preserving flag $[\![\Q]\!] \in \flag(n_1,n_2,\dots, n_k;n)$ with $m_i=n_i - n_{i-1}$ and $n_k \leq p$, $\bR$ is a block upper-triangular matrix, and $\bP$ is a permutation matrix so that $\Q \bR = \D \bP$.
\end{dfn}

We now use~\cref{prop:stiefel_coords} to determine when we can recover a hierarchy-preserving flag from data and then we use~\cref{prop:proj_prop_and_flags} to show how to construct $\bR$ and $\bP$ from this flag. Finally, we combine~\cref{prop:stiefel_coords,prop:proj_prop_and_flags} to define when we can find an FD (see~\cref{prop:fd_def}) and investigate its uniqueness (see~\cref{prop:block_amb}).

\begin{prop}\label{prop:stiefel_coords}
    Suppose $\cA_1 \subset \cA_2 \subset \cdots \subset \cA_k$ is a column hierarchy for $\D$. Then there exists $\Q= [\Q_1\,|\,\Q_2\,|\, \cdots\,|\,\Q_k]$ that are coordinates for the flag $[\![\Q]\!]\in\flag(n_1,n_2,\dots,n_k;n)$ where $n_i = \mathrm{rank}(\D_{\mathcal{A}_i})$ that satisfies $[\Q_i] = [\boldsymbol{\Pi}_{\Q_{i-1}^\perp}\cdots \boldsymbol{\Pi}_{\Q_1^\perp}\B_i]$ and the \textbf{projection property} (for $i=1,2\dots,k$): 
    \begin{equation}\label{eq:projection_property}
    \boldsymbol{\Pi}_{\Q_i^\perp}\boldsymbol{\Pi}_{\Q_{i-1}^\perp}\cdots \boldsymbol{\Pi}_{\Q_1^\perp} \B_i = 0.
    \end{equation}
\end{prop}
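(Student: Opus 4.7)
The plan is to prove the proposition by induction on the hierarchy depth $i$, essentially performing a block Gram-Schmidt process on the matrices $\B_1, \B_2, \dots, \B_k$. The construction is completely constructive: at each level we peel off the new subspace directions contributed by $\B_i$ after subtracting what is already captured by $\Q_1,\dots,\Q_{i-1}$.

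For the base case $i=1$, set $\Q_1 \in St(m_1,n)$ to be any orthonormal basis of $[\B_1] = [\D_{\cA_1}]$, which has dimension $n_1 = m_1$ by the column hierarchy assumption. Since $\boldsymbol{\Pi}_{\Q_0^\perp}=\I$, the projection property $\boldsymbol{\Pi}_{\Q_1^\perp}\B_1 = 0$ is immediate. For the inductive step, assume $\Q_1,\dots,\Q_{i-1}$ have been built with mutually orthogonal column blocks satisfying the claimed properties and with $[\Q_1,\dots,\Q_{i-1}] = [\D_{\cA_{i-1}}]$. A short calculation (using $\Q_a^\top\Q_b = 0$ for $a\ne b$) shows that on mutually orthogonal blocks the composed projector collapses:
\begin{equation}
\boldsymbol{\Pi}_{\Q_{i-1}^\perp}\cdots\boldsymbol{\Pi}_{\Q_1^\perp} \;=\; \I - \sum_{j=1}^{i-1}\Q_j\Q_j^\top,
\end{equation}
so this operator is the orthogonal projector onto $[\D_{\cA_{i-1}}]^\perp$.

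Now define $\widetilde{\B}_i := \boldsymbol{\Pi}_{\Q_{i-1}^\perp}\cdots\boldsymbol{\Pi}_{\Q_1^\perp}\B_i$ and let $\Q_i \in St(m_i,n)$ be any orthonormal basis of $[\widetilde{\B}_i]$. By construction, $[\Q_i] \subseteq [\D_{\cA_{i-1}}]^\perp$, so $\Q_i$ is orthogonal to each $\Q_j$ with $j<i$, and $[\Q_1|\cdots|\Q_i]$ lies in $St(n_i,n)$, giving Stiefel coordinates for a flag of the correct signature. The projection property at level $i$ then reduces to $\boldsymbol{\Pi}_{\Q_i^\perp}\widetilde{\B}_i = 0$, which holds because $[\Q_i] = [\widetilde{\B}_i]$.

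The main obstacle — and the only step that is not bookkeeping — is verifying that $\dim[\widetilde{\B}_i] = m_i := n_i - n_{i-1}$, so that the resulting Stiefel block has exactly the required width. Here one uses the orthogonal direct sum decomposition $[\D_{\cA_i}] = [\D_{\cA_{i-1}}] + [\widetilde{\B}_i]$ with $[\widetilde{\B}_i]\perp[\D_{\cA_{i-1}}]$, which gives
\begin{equation}
n_i \;=\; \dim[\D_{\cA_i}] \;=\; \dim[\D_{\cA_{i-1}}] + \dim[\widetilde{\B}_i] \;=\; n_{i-1} + \dim[\widetilde{\B}_i].
\end{equation}
Rearranging yields $\dim[\widetilde{\B}_i] = m_i$, which is exactly the dimension requirement from the column-hierarchy hypothesis of \Cref{def:col_hierarchy}. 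This closes the induction and simultaneously establishes $[\Q_i] = [\boldsymbol{\Pi}_{\Q_{i-1}^\perp}\cdots\boldsymbol{\Pi}_{\Q_1^\perp}\B_i]$ and the projection property for every $i=1,\dots,k$.
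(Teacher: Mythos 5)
Your proof is correct and follows essentially the same route as the paper's: an inductive block Gram--Schmidt construction that defines $\Q_i$ as an orthonormal basis of $[\boldsymbol{\Pi}_{\Q_{i-1}^\perp}\cdots\boldsymbol{\Pi}_{\Q_1^\perp}\B_i]$, observes that the composed projector collapses to $\I - \sum_{j<i}\Q_j\Q_j^\top$ once the blocks are mutually orthogonal, and verifies the width of $\Q_i$ from the rank jump in the column hierarchy. The paper breaks the inductive step into five labelled sub-claims (non-zero, Stiefel coordinates, hierarchy, projection property, dimensions) whereas you condense these; your dimension argument via the orthogonal direct sum $[\D_{\cA_i}] = [\D_{\cA_{i-1}}]\oplus[\widetilde{\B}_i]$ is a bit crisper than the paper's, and it simultaneously handles the paper's separate ``non-zero'' sub-claim since $m_i = n_i - n_{i-1} > 0$ forces $\widetilde{\B}_i \neq \bm{0}$.
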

\begin{proof}
    Define (for $i=2,3,\dots,k$) the projector onto the null space of $[\Q_1,\Q_2,\dots,\Q_i]$, as $\boldsymbol{\Pi}_{\Q_{:i}^\perp} = \I - \Q_{:i}\Q_{:i}^\top$. We use this to define $\C_i = \boldsymbol{\Pi}_{\Q_{:i-1}^\perp}\B_i$ and $\Q_i \in St(m_i,n)$ so that $[\Q_i] = [\C_i]$. Then we use mathematical induction to show results ending in ~\cref{eq:projection_property} and $\Q_i \in St(m_i,n)$ with $m_i = n_i - n_{i-1}$ where $n_i = \mathrm{rank}(\D_{\cA_i})$.
\end{proof}

The simplest methods for recovering $\Q$ so that $[\Q_i] = [\boldsymbol{\Pi}_{\Q_{i-1}^\perp}\cdots \boldsymbol{\Pi}_{\Q_1^\perp}\B_i]$ include the left singular vectors from the truncated SVD and the $\Q$ matrix from the QR decomposition with pivoting. We will now use $\Q$ and the projection property to construct $\bR$ and $\bP$ for the FD.


\begin{prop}\label{prop:proj_prop_and_flags}
 Suppose $\cA_1 \subset \cA_2 \subset \cdots \subset \cA_k$ is a column hierarchy for $\D$. Then there exists some hierarchy-preserving 
 $[\![\Q]\!] \in \flag(n_1,n_2,\dots, n_k;n)$ (with $n_i = \mathrm{rank}(\D_{\mathcal{A}_i})$) 
 that satisfies the projection property of $\D$ and can be used for a flag decomposition of $\D$ with
    \begin{align}
        \bR_{i,j} &= 
        \begin{cases}
            \Q_i^\top\boldsymbol{\Pi}_{\Q_{i-1}^\perp}\cdots \boldsymbol{\Pi}_{\Q_1^\perp} \B_i, &i=j\\
            \Q_i^\top\boldsymbol{\Pi}_{\Q_{i-1}^\perp}\cdots \boldsymbol{\Pi}_{\Q_1^\perp} \B_j, &i < j
        \end{cases},\label{eq:onlyR}\\
        \bP_i &= \left[ \,\mathbf{e}_{b_{i,1}}\,|\, \mathbf{e}_{b_{i,2}}\,|\, \cdots\,|\, \mathbf{e}_{b_{i,|\cB_i|}} \right]\label{eq:onlyP}
    \end{align}
    where $\{b_{i,j}\}_{j=1}^{|\cB_i|} = \cB_i$ and $\mathbf{e}_{b}$ is the $b_{i,j}$$^{\mathrm{th}}$ standard basis vector.
\end{prop}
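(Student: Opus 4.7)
The plan is to verify the factorization $\D = \Q\bR\bP^\top$ directly, given the formulas for $\bR$ and $\bP$, using Proposition 1 to supply the flag $[\![\Q]\!]$ satisfying the projection property. Since $\bP^\top \bP = \I$, this is equivalent to showing $\D\bP = \Q\bR$. I would handle the two sides separately: first confirm that $\bP$ as defined reorders the columns so that $\D\bP = \B = [\B_1 | \cdots | \B_k]$, and then show that $\Q\bR$ equals this block matrix column-block by column-block.

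For the permutation step, I would note that by construction $\bP_i$ is composed of standard basis vectors indexed by $\cB_i$, so $\D \bP_i = \D_{\cB_i} = \B_i$, and concatenating over $i$ gives $\D\bP = \B$. This is immediate from the definition in \eqref{eq:onlyP}.

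The main content is verifying $\Q\bR = \B$ block-column by block-column. Since $\bR$ is block upper-triangular, the $j$-th block column gives the identity to prove: $\B_j = \sum_{i=1}^{j} \Q_i \bR_{i,j}$. Substituting \eqref{eq:onlyR} and using $\Q_i \Q_i^\top = \I - \boldsymbol{\Pi}_{\Q_i^\perp}$, I would set $\bS_0 = \I$ and $\bS_i = \boldsymbol{\Pi}_{\Q_i^\perp}\bS_{i-1} = \boldsymbol{\Pi}_{\Q_i^\perp}\cdots\boldsymbol{\Pi}_{\Q_1^\perp}$, and observe that each summand collapses to a telescoping difference
\begin{equation*}
\Q_i\Q_i^\top \bS_{i-1}\B_j = (\I - \boldsymbol{\Pi}_{\Q_i^\perp})\bS_{i-1}\B_j = (\bS_{i-1} - \bS_i)\B_j.
\end{equation*}
Summing from $i=1$ to $j$ yields $(\bS_0 - \bS_j)\B_j = \B_j - \bS_j \B_j$, and the projection property \eqref{eq:projection_property} from Proposition 1 kills the second term since $\bS_j \B_j = \boldsymbol{\Pi}_{\Q_j^\perp}\cdots\boldsymbol{\Pi}_{\Q_1^\perp}\B_j = 0$.

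I expect the only subtle point to be verifying that the projection property, which is stated at index $i$ applied to $\B_i$, also gives what we need when telescoping the $j$-th block column (where projections only go up to $\boldsymbol{\Pi}_{\Q_j^\perp}$); but this is precisely the projection property at index $j$ applied to $\B_j$, so it lines up. Everything else is a block-matrix rewriting, and the hierarchy-preservation of $[\![\Q]\!]$ and the shapes $\bR_{i,j} \in \R^{m_i \times |\cB_j|}$ follow from Proposition 1 and the definitions.
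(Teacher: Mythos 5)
Your proof is correct and covers all the required pieces: the permutation identity $\D\bP=\B$, the block-column verification $\Q\bR=\B$, and the appeal to Proposition~1 for the existence and properties of $[\![\Q]\!]$. The route you take is a mild variation on the paper's. The paper first uses the block-projector identity $\boldsymbol{\Pi}_{\Q_j^\perp}\cdots\boldsymbol{\Pi}_{\Q_1^\perp}=\boldsymbol{\Pi}_{\Q_{:j}^\perp}$ (established inductively in the proof of Prop.~1) together with the projection property to write $\B_j=\sum_{i\le j}\Q_i\Q_i^\top\B_j$, and then separately checks via inter-block orthogonality that $\Q_i^\top\boldsymbol{\Pi}_{\Q_{i-1}^\perp}\cdots\boldsymbol{\Pi}_{\Q_1^\perp}=\Q_i^\top$, so that the displayed projection form of $\bR_{i,j}$ agrees with the simpler $\Q_i^\top\B_j$. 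Your telescoping argument, setting $\bS_i=\boldsymbol{\Pi}_{\Q_i^\perp}\cdots\boldsymbol{\Pi}_{\Q_1^\perp}$ and using $\Q_i\Q_i^\top\bS_{i-1}=(\bS_{i-1}-\bS_i)$, reaches $\B_j-\bS_j\B_j=\B_j$ directly from the stated formula for $\bR_{i,j}$; it needs only the orthonormality of each individual $\Q_i$ plus the projection property at index $j$, and does not rely on the block-projector identity or on simplifying $\bR_{i,j}$. This makes your argument a bit more self-contained, though it does give up the equivalence $\bR_{i,j}=\Q_i^\top\B_j$ that the paper records explicitly and uses later in Algorithm~2 and in the uniqueness argument of Prop.~4. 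Either approach is sound; if you wanted to match the paper's downstream use, it would be worth noting as a remark that inter-block orthogonality collapses the projection form to $\Q_i^\top\B_j$.
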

\begin{proof}[Proof sketch]
    This is proved using the previous proposition.
\end{proof}

\begin{prop}\label{prop:fd_def}
    A data matrix $\D$ admits a flag decomposition of type $(n_1,n_2, \cdots, n_k; n)$ if and only if $\cA_1 \subset \cA_2 \subset \cdots \subset \cA_k$ is a column hierarchy for $\D$.
\end{prop}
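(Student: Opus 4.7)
The plan is to prove the biconditional by handling each implication separately, leaning on Propositions 1 and 2 for the ``if'' direction and on the definition of the FD for the ``only if'' direction.

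For the ``if'' direction (column hierarchy $\Rightarrow$ FD exists), I would chain together Propositions 1 and 2. Prop 1 produces $\Q = [\Q_1 | \Q_2 | \cdots | \Q_k] \in St(n_k, n)$, with $n_i = \mathrm{rank}(\D_{\cA_i})$, whose blocks satisfy $[\Q_i] = [\boldsymbol{\Pi}_{\Q_{i-1}^\perp}\cdots\boldsymbol{\Pi}_{\Q_1^\perp}\B_i]$ and the projection property, and which therefore parameterises a hierarchy-preserving flag $[\![\Q]\!] \in \flag(n_1,\dots,n_k;n)$. Prop 2 then supplies $\bR$ via~\cref{eq:onlyR} and $\bP$ via~\cref{eq:onlyP}, giving all three factors in the correct block shape demanded by the FD definition. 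The only non-automatic step is verifying $\D = \Q\bR\bP^\top$. Because $\D\bP = \B = [\B_1|\cdots|\B_k]$, it is enough to check $\Q\bR = \B$ block-column by block-column. Block orthonormality of $\Q$ (which follows from $\Q \in St(n_k,n)$) yields $\Q_i^\top\boldsymbol{\Pi}_{\Q_{i-1}^\perp}\cdots\boldsymbol{\Pi}_{\Q_1^\perp} = \Q_i^\top$, so $\bR_{i,j} = \Q_i^\top \B_j$ for $i \leq j$. The $j$-th block column of $\Q\bR$ then collapses to $\sum_{i=1}^{j} \Q_i \Q_i^\top \B_j = \Q_{:j}\Q_{:j}^\top \B_j$, where $\Q_{:j} := [\Q_1|\cdots|\Q_j]$; and this equals $\B_j$ exactly because hierarchy preservation forces $\B_j \in [\Q_{:j}]$.

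For the ``only if'' direction (FD exists $\Rightarrow$ column hierarchy), I would simply unpack the FD definition. If $\D$ admits an FD of type $(n_1,\dots,n_k;n)$, then by definition $\Q$ are Stiefel coordinates for a hierarchy-preserving flag, so $[\D_{\cA_i}] = [\Q_1,\dots,\Q_i]$ for every $i$. Since $\Q \in St(n_k,n)$ has orthonormal columns, the $n \times n_i$ submatrix $\Q_{:i}$ also has orthonormal columns, giving $\dim([\Q_1,\dots,\Q_i]) = n_i$ and hence $\dim([\D_{\cA_i}]) = n_i$. The type of a flag requires $n_1 < n_2 < \cdots < n_k$, so these dimensions strictly increase, which is precisely the column hierarchy condition of~\cref{def:col_hierarchy}.

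The main obstacle is confined to the ``if'' direction, namely the algebraic check that the explicit $\bR$ from~\cref{eq:onlyR} and $\bP$ from~\cref{eq:onlyP} truly reassemble $\D$. This is not deep, but it is the one place where the projection property, block orthogonality, and hierarchy preservation all have to cooperate: orthogonality lets each $\Q_i^\top$ pass through the projector chain, and hierarchy preservation ensures that $\Q_{:j}\Q_{:j}^\top$ acts as the identity on $\B_j$. All other steps reduce to chasing the relevant definitions and invoking the two preceding propositions.
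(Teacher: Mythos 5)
Your proposal is correct and follows essentially the same route as the paper's proof: the ``if'' direction is delegated to~\cref{prop:stiefel_coords} and~\cref{prop:proj_prop_and_flags}, and the ``only if'' direction is obtained by unpacking the FD definition to show $\dim([\D_{\cA_i}])$ strictly increases. The only cosmetic differences are that you re-derive the algebraic check $\Q\bR = \B$ which the paper leaves inside~\cref{prop:proj_prop_and_flags}, and you deduce strict growth of dimensions directly from $n_1<\cdots<n_k$ in the flag type rather than from $\Q_{i-1}^\top\Q_i=\bm{0}$ as the paper does; both are equivalent.
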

\begin{proof}[Proof sketch]
    We use~\cref{prop:stiefel_coords,prop:proj_prop_and_flags} and the definition of a column hierarchy for $\D$. Details in suppl. material.
\end{proof}
\emph{Therefore, any $\D$ with an associated column hierarchy admits a hierarchy-preserving FD.} Now we state a uniqueness result for the FD.

\begin{prop}[Block rotational ambiguity]\label{prop:block_amb}
    Given the FD $\D = \Q \bR \bP^\top$, any other Stiefel coordinates for the flag $[\![\Q]\!]$ produce an FD of $\D$ (via~\cref{prop:proj_prop_and_flags}). Furthermore, different Stiefel coordinates for $[\![\Q]\!]$ produce the same objective function values in~\cref{eq:general_opt} and~\cref{eq:iterative_opt} (for $i=1,\cdots,k$).
\end{prop}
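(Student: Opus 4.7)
The plan is to exploit the quotient description of the flag manifold mentioned earlier in the excerpt, namely $\flag(n_1,\dots,n_k;n) \cong St(n_k,n)/(O(m_1)\times\cdots\times O(m_k))$. This immediately tells us that any two sets of Stiefel coordinates $\Q$ and $\tilde{\Q}$ representing the same flag $[\![\Q]\!]$ differ by a block-diagonal orthogonal action: $\tilde{\Q}_i = \Q_i \M_i$ for some $\M_i \in O(m_i)$, or equivalently $\tilde{\Q} = \Q \M$ with $\M = \mathrm{blockdiag}(\M_1,\dots,\M_k)$.

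First I would verify that $\tilde{\Q}$ still satisfies the hypotheses required by \cref{prop:proj_prop_and_flags}. The key observation is that the projectors are invariant under the block rotation: since $\M_i \M_i^\top = \I$, we have $\boldsymbol{\Pi}_{\tilde{\Q}_i^\perp} = \I - \tilde{\Q}_i \tilde{\Q}_i^\top = \I - \Q_i \M_i \M_i^\top \Q_i^\top = \boldsymbol{\Pi}_{\Q_i^\perp}$. Therefore the projection property \cref{eq:projection_property} holds for $\tilde{\Q}$ whenever it holds for $\Q$, and $[\tilde{\Q}_1,\dots,\tilde{\Q}_i] = [\Q_1,\dots,\Q_i] = [\D_{\cA_i}]$, so $[\![\tilde{\Q}]\!]$ is still hierarchy-preserving. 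Then \cref{prop:proj_prop_and_flags} applies and yields blocks $\tilde{\bR}_{i,j} = \tilde{\Q}_i^\top\boldsymbol{\Pi}_{\tilde{\Q}_{i-1}^\perp}\cdots\boldsymbol{\Pi}_{\tilde{\Q}_1^\perp}\B_j = \M_i^\top \bR_{i,j}$, i.e.\ $\tilde{\bR} = \M^\top \bR$, with the same permutation $\bP$. A one-line check confirms this is indeed an FD: $\tilde{\Q}\tilde{\bR}\bP^\top = \Q \M \M^\top \bR \bP^\top = \Q\bR\bP^\top = \D$.

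For the second claim, I would argue that both objective functions in \cref{eq:general_opt} and \cref{eq:iterative_opt} depend on $\Q$ only through the projectors $\Q_i\Q_i^\top$ (equivalently, only through the subspaces $[\Q_i]$). Indeed, the chordal distance on the flag manifold in \cref{tab:chordaldist} is built entirely from $\X_i\X_i^\top$ blocks, and the iterative construction in \cref{prop:stiefel_coords,prop:proj_prop_and_flags} uses $\Q_i$ only via $\boldsymbol{\Pi}_{\Q_i^\perp} = \I - \Q_i\Q_i^\top$. Since we already showed $\tilde{\Q}_i\tilde{\Q}_i^\top = \Q_i\Q_i^\top$ for each $i$, every quantity appearing in the objectives is invariant under the substitution $\Q \mapsto \tilde{\Q} = \Q\M$, and the objective values coincide.

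The main obstacle, and really the only subtle point, is establishing the first step, that the only ambiguity in Stiefel coordinates for a fixed flag is the block-diagonal orthogonal action. This is exactly the content of the quotient description cited from Ye et al.\ and noted in the remark on flag manifolds as a quotient of groups, so I would simply invoke it rather than re-derive it. Everything else reduces to the algebraic identity $\M_i \M_i^\top = \I$ together with direct substitution into the formulas of \cref{prop:proj_prop_and_flags}.
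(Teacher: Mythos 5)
Your proposal is correct and follows essentially the same route as the paper's proof: invoke the quotient structure $\flag(n_1,\dots,n_k;n)\cong St(n_k,n)/(O(m_1)\times\cdots\times O(m_k))$ to write any other representative as $\tilde{\Q}=\Q\M$ with $\M=\mathrm{blockdiag}(\M_1,\dots,\M_k)$, then observe $\Q_i\Q_i^\top=(\Q_i\M_i)(\Q_i\M_i)^\top$, which makes the projectors (hence the projection property, the hierarchy, and both objectives) invariant, and finally set $\tilde{\bR}=\M^\top\bR$ to recover $\D$. The only cosmetic difference is that you state the ``converse'' direction (that all Stiefel representatives of a fixed flag are block rotations of one another) explicitly as a consequence of the quotient, whereas the paper leaves it implicit; the substance is identical.
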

\begin{proof}[Proof sketch]
    Notice $\Q_i \Q_i^T = (\Q_i\mathbf{M}_i)  (\Q_i\mathbf{M}_i)^\top$ for any $\Q_i \in St(m_i,n)$ and $\mathbf{M}_i \in O(m_i)$. See our suppl. material for details.
\end{proof}

\subsection{Flag recovery}
In this section, we introduce an approach for recovering the FD $\D = \Q \bR \bP^\top$ from a given, corrupted version of the dataset, $\tilde{\D}$ and the column hierarchy $\cA_1 \subset \cA_2 \subset \cdots \subset \cA_k$ for $\D$. We call recovering $[\![ \Q ]\!]$ from $\tilde{\D}$ and $\cA_1 \subset \cA_2 \subset \cdots \subset \cA_k$ the \emph{flag recovery}.

Recall that any $[\![ \Q ]\!]$ satisfying the projection property of $\D$ can be used for a FD (see~\cref{prop:proj_prop_and_flags}). However, since we only have access to $\tilde{\D}$, we may not be able to satisfy this property. As a remedy, we try to get as close as possible to satisfying the projection property by optimizing for $[\![\Q]\!]$ such that $\boldsymbol{\Pi}_{\Q_i^\perp} \cdots \boldsymbol{\Pi}_{\Q_1^\perp}\tilde{\B}_i \approx \bm{0}$ for each $i=1,2,\dots,k$. We minimize this cost column-wise to solve the problem in maximum generality. Specifically, we propose the following minimization:
\begin{equation}\label{eq:general_opt}
    [\![\Q]\!] = \argmin_{[\![\X]\!] \in \flag(n_1,n_2, \dots, n_k;n)} \sum_{i=1}^k \sum_{j \in \cB_i}\| \boldsymbol{\Pi}_{\X_i^\perp} \cdots \boldsymbol{\Pi}_{\X_1^\perp} \tilde{\mathbf{d}}_j \|_r^q
\end{equation}
for $r\geq 0$, $q > 0$. Choosing small $r$ and $q$ (\eg, $r =0$ and $q=1$) would result in a robust flag recovery, optimal for recovering $\D$ in the presence of outlier columns in $\tilde{\D}$. This problem is difficult, even after restricting $q$ and $r$, so we address the iterative optimization for each $\Q_i$ for $i=1$, then $i=2$, and so on until $i=k$.
\begin{equation}\label{eq:iterative_opt}
    \Q_i =  \argmin_{\X \in St(m_i,n)} \sum_{j \in \cB_j}\| \boldsymbol{\Pi}_{\X^\perp} \boldsymbol{\Pi}_{\Q^\perp_{i-1}} \cdots \boldsymbol{\Pi}_{\Q^\perp_1} \tilde{\mathbf{d}}_j \|_r^q.
\end{equation}
The solution to the case where $r=q=2$ is obtained by the first $m_i$ left singular vectors of $\boldsymbol{\Pi}_{\Q^\perp_{i-1}} \cdots \boldsymbol{\Pi}_{\Q^\perp_1} \tilde{\D}_{\mathcal{B}_j}$.
In general, solving~\cref{eq:iterative_opt} for some $i$ recovers $\Q_i$ whose columns form a basis for a $m_i$ dimensional subspace in $\R^n$.
Although outputting a truncated basis via QR with pivoting or rank-revealing QR decompositions would offer faster alternatives to SVD for solving~\cref{eq:iterative_opt}, SVD offers more reliable subspace recovery~\cite{demmel1997applied}. Thus, we use SVD-based algorithms and leave QR methods for future work.

For cases where $\tilde{\D}$ has outlier columns, we use an $L_1$ penalty, \ie, $q=1$, and introduce an \textbf{IRLS-SVD solver}\footnote{IRLS denotes iteratively reweighted least squares.}, a simple method that resembles IRLS algorithms for subspace recovery~\cite{zhang2014novel,lerman2015robust,vidal2018dpcp,lerman2018fast,lerman2018overview,garg2019subspace,mankovich2022flag}. In practice, we implement a vanilla IRLS-SVD algorithm which could further be made faster and provably convergent using tools from~\cite{aftab2014generalized,beck2015weiszfeld,kummerle2021iteratively,kummerle2021scalable,verdun2024fast}. We leave more advanced solvers, as well as working with other values of $r$ and $q$ (e.g., $r=0$~\cite{liu2012robust}), for future work.

\subsection{\algname}
We now propose~\algname, an algorithm for finding FD and its robust version, Robust FD (RFD). Our algorithm is inspired by the Block Modified Gram-Schmidt (BMGS) procedure~\cite{jalby1991stability,barlow2019block}. Modified Gram-Schmidt (MGS) is a more numerically stable implementation of the classical Gram-Schmidt orthogonalization. BMGS runs an MGS algorithm on block matrices, iteratively projecting and ortho-normalizing matrices rather than vectors, to output a QR decomposition. In contrast, we use~\algname~on a data matrix with a column hierarchy to produce a hierarchy-preserving FD. We summarize the properties of Gram-Schmidt variants in~\cref{tab:alg_table1}. 
\begin{table}[t!]
    \centering
    \caption{A summary of GS algorithms and their properties.}
    \setlength{\tabcolsep}{8pt}
    \label{tab:alg_table1}
    \resizebox{\columnwidth}{!}{
    \begin{tabular}{c|cccc}
        \toprule
        Algorithm & GS & MGS & BMGS & \algname \\
        \midrule
        Stable & \xmark & \cmark & \cmark & \cmark \\ 
        Block-wise & \xmark & \xmark & \cmark & \cmark \\ 
        Hier.-pres. & \xmark & \xmark & \xmark & \cmark \\ 
        \bottomrule
    \end{tabular}  
    }
    \vspace{-4mm}
\end{table}

\algname~operates by first generating a permutation matrix $\bP$ (see~\cref{eq:onlyP}) to extract the matrix $\B = \D\bP^\top$, using the column hierarchy. Then each iteration $i=1,2,\dots,k$ constructs $\boldsymbol{\Pi}_{\Q_{i-1}^\perp}\cdots \boldsymbol{\Pi}_{\Q_1^\perp} \B_i$, solves an optimization of the form~\cref{eq:iterative_opt}, and then constructs each $\bR_{i,j}$ for $j \leq i$ (see~\cref{eq:onlyR}). In experiments, we call FD the output of~\algname~using SVD with $r=q=2$ and Robust FD (RFD) the iterative variant using $r=2$ and $q=1$ to solve~\cref{eq:iterative_opt}. Algorithm details are in suppl. material.

Stability results and the search for more optimal algorithms, such as those using block Householder transformations~\cite{griem2024block} are left to future work. Many other block matrix decompositions exist and a brief discussion of such a low-rank block matrix decomposition~\cite{ong2016beyond} can be found in suppl. material.

\paragraph{On the flag type}
Flag type is an input to~\algname. \emph{Detecting} or selecting an adapted flag type from data rather than relying on a heuristic choice, is recently addressed by Szwagier~\etal in principal subspace analysis~\cite{szwagier2024curseisotropyprincipalcomponents}. 
The FD model does not benefit from this advance because it preserves hierarchies rather than directions of maximum variance. We now discuss methods for estimating flag type.


Assuming full access to $\D$, the flag type is $(n_1,n_2,\dots,n_k;n)$ where $n_i = \mathrm{rank}(\D_{\cA_i})$ (see~\cref{prop:stiefel_coords}). 
Yet, the data can be corrupted, \ie, we observe only $\tilde{\mathbf{D}} = \mathbf{D} + \bm{\epsilon}$ ($ \bm{\epsilon} $ denotes random noise) instead of the true $\D$. This leads to an estimation problem of the flag type of the FD assuming access to $\tilde{\D}$ and the true (known) column hierarchy for $\D$.     

A naive approach to address the problem of flag type estimation for our FD is to run the FD along with a singular value truncation in each SVD. Methods for truncating the SVs include the \emph{elbow} and \emph{Gavish-Dohono}~\cite{gavish2014optimal,falini2022review}. In this work, given a column hierarchy and $\tilde{\D}$ (but not $\D$), we choose a flag type where $n_k < \mathrm{rank}(\tilde{\D})$ and input it to FD. In doing so, the output of FD forms a reduced-rank approximation of $\D$ denoted $\hat{\D} = \Q \bR \bP^\top$. 

A promising future research direction involves exploring smarter truncation methods for extracting the flag type of $\D$ under specific contamination criteria.

\section{Applications}\label{sec:apps}
Before moving on to experimental results, we specify applications of Flag Decomposition (FD), which enables reconstruction in the presence of data corruption, visualization, classification, and a novel prototype and distance for few-shot learning. 

\subsection{Reconstruction}
Consider the matrix $\D$ with an associated column hierarchy $\cA_1 \subset \cA_2 \subset \cdots\subset  \cA_k$. Suppose we have a corrupted version $\tilde{\mathbf{D}}$ and the feature hierarchy is known a priori. We use FD to recover $\D$ from $\tilde{\D}$. For example, $\tilde{\D}$ could be a (pixels $\times$ bands) flattened hyperspectral image with a hierarchy on the bands (see~\cref{ex:spectrum_hierarchy} and~\cref{fig:concept}) with outlier bands or additive noise. Another example includes a hierarchy of subjects: with images of subject $1$ inside those of subjects $1$ \& $2$ (see~\cref{fig:flag cartoon2}). A reconstruction using FD respects this hierarchy by preserving the identities of the two subjects.
\begin{figure}[t!]
    \centering
    \vspace{-4mm}
    \includegraphics[width=\linewidth]{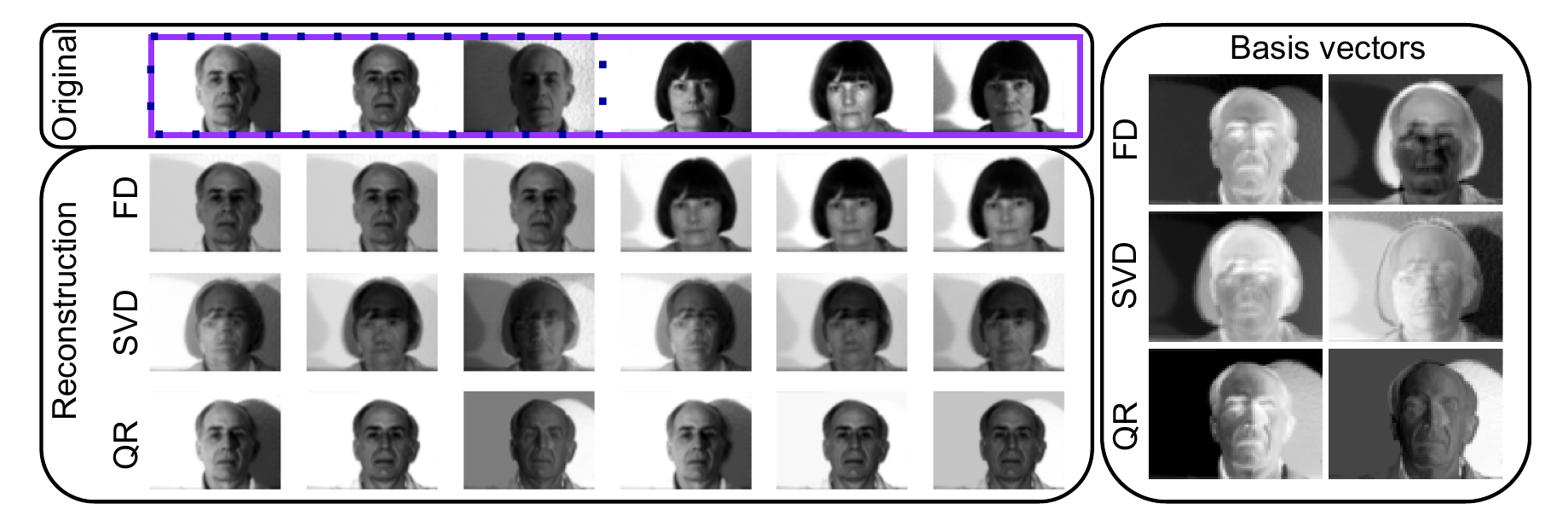}
    \caption{Images from the YFB~\cite{belhumeur1997eigenfaces} are flattened and horizontally stacked into $\D$. We use the hierarchy with the images of the first subject as $\cA_1$ and all images as $\cA_2$. We run FD (flag type $(1,2)$) and baselines (rank $2$). FD is the only method to correctly reconstruct the subjects. We plot the basis vectors (eigenfaces) on the right and find FD extracts basis elements that most closely resemble the subjects.}
    \vspace{-6mm}
    \label{fig:flag cartoon2}
\end{figure}

Suppose FD is computed by running~\algname~ on $\tilde{\D}$ to output $\Q, \bR, \bP$. Then we reconstruct $\hat{\D} = \Q \bR \bP^\top \approx \D$. This is a \textit{low-rank} reconstruction of $\tilde{\D}$ when $n_k < \mathrm{rank}(\tilde{\D})$. \emph{Unlike other reconstruction algorithms this application preserves the column hierarchy.}



\subsection{Leveraging the geometry of flags}
Consider a collection of $\mathcal{D} = \{\D^{(j)} \in \R^{n \times p_j}\}_{j=1}^N$ with column hierarchies $\cA_1^{(j)} \subset \cA_2^{(j)} \subset \cdots \subset \cA_k^{(j)}$. 
For example, $\mathcal{D}$ could be a collection of (band  $\times$ pixel) hyperspectral image patches. After choosing one flag type $(n_1,n_2,\dots,n_k;n)$ with $n_k \leq \mathrm{min}(p_1,\dots,p_N)$, we can use~\algname~with this flag type on each $\D^{(j)}$ to extract the collection of flags $ \mathcal{Q} = \{[\![\Q^{(j)}]\!]\}_{j=1}^N \subset \flag(n_1,n_2,\dots,n_k;n)$. Now, we can use chordal distance on the flag manifold $\flag(n_1,n_2,\dots,n_k;n)$ or the product of Grassmannians $Gr(m_1,n) \times Gr(m_2,n) \times \cdots \times Gr(m_k,n)$ to build an $N \times N$ distance matrix and run multidimensional scaling (MDS)~\cite{kruskal1978multidimensional} to visualize $\mathcal{D}$ or $k$-nearest neighbors~\cite{marrinan2021minimum} to cluster $\mathcal{D}$ (see~\cref{fig:concept}). Other clustering algorithms like $k$-means can also be implemented with means on products of Grassmannians (\eg,~\cite{fletcher2009geometric, draper2014flag, mankovich2022flag, mankovich2023subspace}) or chordal flag averages (\eg,~\cite{Mankovich_2023_ICCV}). Additionally, we can generate intermediate flags by sampling along geodesics between flags in $\mathcal{Q}$ using tools like {\tt manopt}~\cite{boumal2014manopt,townsend2016pymanopt} for exploration of the flag manifold between data samples.

\subsection{Few-shot learning}
In few-shot learning, a model is trained on very few labeled examples, a.k.a. `shots' from each class to make accurate predictions. 
Suppose we have a pre-trained feature extractor $f_\Theta: \mathcal{X} \rightarrow \R^{n}$, parameterized by $\Theta$. 
In few-shot learning, the number of classes in the training set is referred to as `ways.' We denote the feature representation of $s$ shots in class $c$ as $f_\Theta(\bm{x}_{c,1}),f_\Theta(\bm{x}_{c,2}),\dots, f_\Theta(\bm{x}_{c,s})$ where $\bm{x}_{c,i} \in \mathcal{X}$. The `support' set is the set of all shots from all classes. A few-shot learning architecture contains a method for determining a class representative (a.k.a. `prototype') in the feature space ($\R^n$) for each class using its shots. A test sample (`query') is then passed through the encoder, and the class of the nearest prototype determines its class. Overall, a classical few-shot learning architecture is comprised of three (differentiable) pieces: (1) a mapping of shots in the feature space to prototypes, (2) a measure of distance between prototypes and queries, and (3) a loss function for fine-tuning the pre-trained encoder.

\begin{figure}[t]
        \centering
        \includegraphics[width=\linewidth, trim= 38mm 0mm  38mm 0mm, clip]{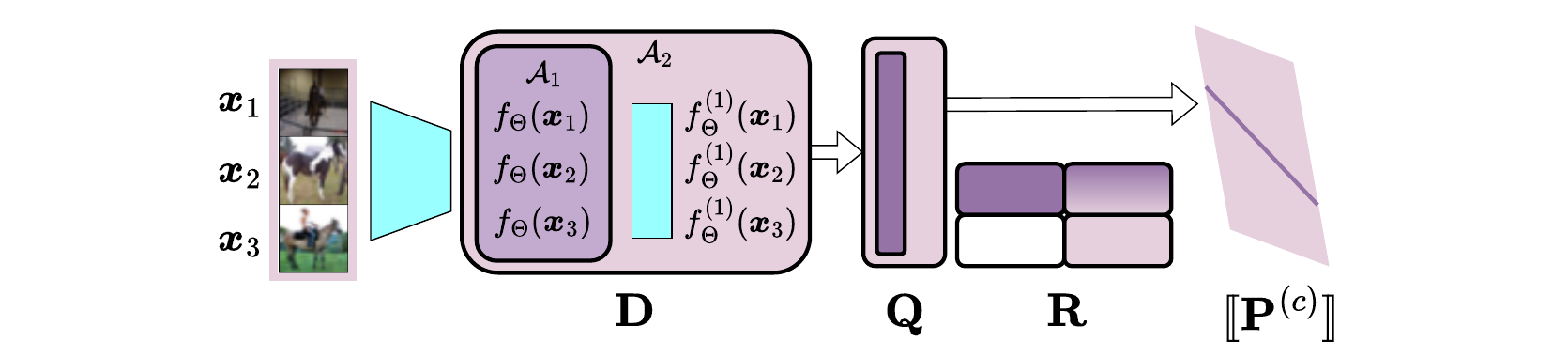}
        \caption{To perform few shot-learning, we embed all $s=3$ shots ($\bm{x}_1, \bm{x}_2, \bm{x}_3$) of one class into one flag using FD. The light shapes are pre-trained and frozen feature extractors.}
        \label{fig:fewshot_flags}
    \end{figure}
\paragraph{Flag classifiers}~\label{sec:fewshot_flags} 
Take a feature extractor that decomposes into $k \geq 2$ functions. Specifically, $f_{\Theta} = f_{\Theta}^{(k)} \circ \cdots \circ f_{\Theta}^{(1)}:\mathcal{X} \rightarrow \R^n$,
where each $f_{\Theta}^{(i)}$ maps to $\R^n$.
We can generalize~\cref{ex:feature_hierarchy} to construct a $k$-part hierarchical data matrix. For simplicity, we consider the case where $k =2$. After constructing a data matrix with a corresponding column hierarchy, we use~\algname~to represent the support of one class, $c$, in the feature space as $[\![\Q^{(c)}]\!]\in \flag(n_1,n_2; n)$ (see~\cref{fig:fewshot_flags}). Now, each subspace $[\Q^{(c)}_1]$ and $[\Q^{(c)}_2]$ represents the features extracted by $f_{\Theta}^{(1)}$ and $f_{\Theta}$, respectively.

    Given a flag-prototype $[\![\Q^{(c)}]\!] \in \flag(n_1,n_2;n)$ and query $\left\{f^{(1)}_{\Theta}(\bm{x}), f_{\Theta}(\bm{x})\right\} \subset\R^n$, we measure distance between the them as
    \begin{equation}\label{eq: fewshot_dist}
        \left \| \boldsymbol{\Pi}_{{\Q^{(c)}_1}^\perp}f_{\Theta}^{(1)}(\bm{x}) \right \|_2^2 + \left \| \boldsymbol{\Pi}_{{\Q^{(c)}_2}^\perp}f_{\Theta}(\bm{x}) \right \|_2^2
    \end{equation}
    where $\boldsymbol{\Pi}_{{\Q^{(c)}_i}^\perp} = \I - \Q^{(c)}_i {\Q^{(c)}_i}^\top$ for $i=1,2$.
    This is proportional to the squared chordal distance on $\flag(1,2;n)$ when the matrix $\left[ f_{\Theta}^{(1)}(\bm{x})| f_{\Theta}(\bm{x}) \right]$ is in Stiefel coordinates.

    Flag classifiers are \emph{fully differentiable} enabling fine-tuning of the feature extractor with a flag classifier loss. We leave this as an avenue for future work.

\begin{table}[t]
\caption{Metrics for evaluating simulations. LRSE stands for Log Relative Squared Error, and $\|\cdot\|_F$ is the Frobenius norm. $[\![\X]\!]$ represents true flag, $\D$ the true data, $[\![\hat{\X}]\!]$ the estimated flag, and $\hat{\D}$ the reconstructed data.}
    \label{tab:metrics}
    \centering
    \begin{tabular}{c|c}
       \toprule
       Metric ($\downarrow$) & Formula\\
       \midrule
       \midrule
       Chordal Distance & $\sqrt{\sum_{i=1}^k m_i - \tr(\X_i^T \hat{\X}_i\hat{\X}_i^T\X_i))}$ \\
       LRSE & $10 \log_{10} \left(\|\mathbf{D} - \hat{\mathbf{D}}\|_F^2/\|\mathbf{D}\|_F^2\right)$\\
       \bottomrule
    \end{tabular}
\end{table}
\vspace{-1mm}
\section{Results}\label{sec:results}
We run three simulations in~\cref{sec: reconstruction} to test the capacity of FD and RFD for flag recovery and reconstruction for noisy and outlier-contaminated data. In~\cref{sec:mds clustering} we visualize clusters of flag representations for hierarchically structured $\D$ matrices using FD. We highlight the utility of FD for denoising hyperspectral images in~\cref{sec: denoising_hs}. We cluster FD-extracted flag representations of hyperspectral image patches via a pixel hierarchy in~\cref{sec: clustering}. Finally, in~\cref{sec: fewshot learning}, we apply flag classifiers to three datasets for classification.

\paragraph{Baselines}
While more modern, task-specific algorithms may exist as baselines for each experiment, our primary objective is to demonstrate the effectiveness of FD and RFD (computed using~\algname) compared to the de facto standards, SVD and QR, in the context of hierarchical data. SVD is a standard denoising method. Two common flag extraction algorithms are SVD~\cite{draper2014flag,mankovich2022flag,Mankovich_2023_ICCV,szwagier2024curseisotropyprincipalcomponents} and QR~\cite{Mankovich_2023_ICCV}. In~\cref{sec: fewshot learning} we compare our results to two standard prototypes (e.g., means and subspaces) for classification within the few-shot learning paradigm.

\paragraph{Metrics}
In the additive noise model $\tilde{\D} = \D +\boldsymbol{\epsilon}$, we compute the signal-to-noise ratio (SNR) in decibels (dB) as 
\begin{equation}\label{eq:snr}
    \mathrm{SNR}(\mathbf{D},\bm{\epsilon}) = 10 \log_{10} \left(\|\mathbf{D}\|_F^2/\|\boldsymbol{\epsilon}\|_F^2\right).
\end{equation}
A negative SNR indicates more noise than signal, and a positive SNR indicates more signal than noise. The rest of our metrics are in~\cref{tab:metrics}.

\subsection{Reconstruction Under Corruption}\label{sec: reconstruction}
For both experiments, we generate $\X \in St(10,4)$ that represents $[\![ \X ]\!] \in \flag(2,4;10)$. Then we use $\X$ to build a data matrix $\D \in \R^{10 \times 40}$ with the feature hierarchy $\mathcal{A}_1 \subset \mathcal{A}_2 = \{ 1,2,\cdots,20\} \subset \{1,2,\cdots,40\}$. We generate $\tilde{\D}$ as either $\D$ with additive noise or $\D$ with columns replaced by outliers. Our goal is to recover $[\![ \X]\!]$ and $\D=\X\bR\bP^\top$ from $\tilde{\D}$ using FD and RFD with a flag type of $(2,4;10)$, and the first $4$ left singular vectors from SVD. We evaluate the estimated $[\![\hat{\X}]\!]$ and $\hat{\D}$ using~\cref{tab:metrics}.

\paragraph{Additive noise}
We contaminate with noise by $\tilde{\D} = \D  + \boldsymbol{\epsilon}$ where $\boldsymbol{\epsilon}$ is sampled from either a mean-zero normal, exponential, or uniform distribution of increasing variance. 
FD and RFD improve flag recovery over SVD and produce similar reconstruction errors (see~\cref{fig:synthetic_noise}).

\begin{figure}[t]
    \centering
    \includegraphics[width=\linewidth]{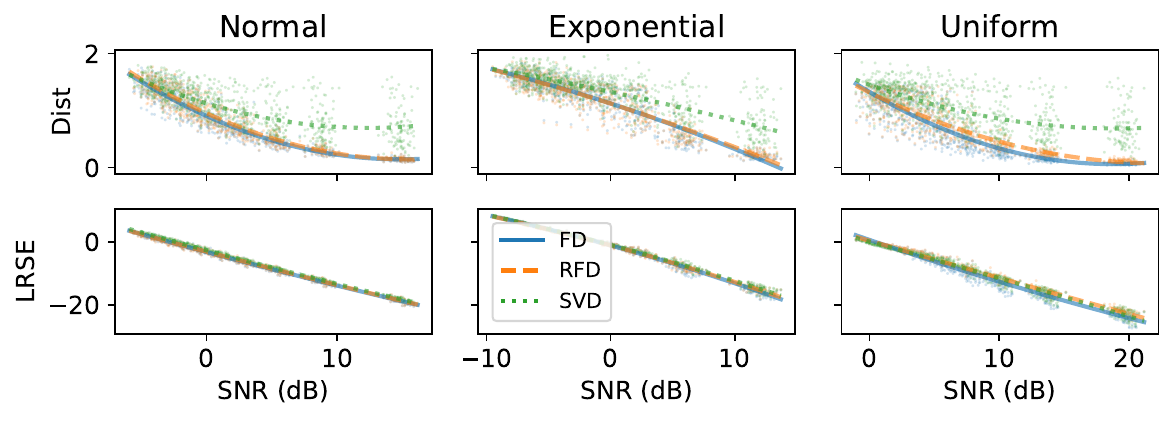}
    \caption{FD \& RFD improve flag recovery while maintaining accurate reconstructions. SNR is~\cref{eq:snr}. LSRE \& Dist are in~\cref{tab:metrics} with \emph{Dist} as the chordal distance. Best fit lines are quadratic.}
    \label{fig:synthetic_noise}
\end{figure}

\paragraph{Robustness to outliers}
We construct $\tilde{\D}$ to contain outlier columns. The inlier columns of $\tilde{\D}$ form the flag-decomposable $\D = \X \bR \bP^\top$ with the flag $[\![\X]\!]$. 
FD and RFD outperform SVD and IRLS-SVD, with RFD providing the most accurate flag recovery and inlier reconstructions (see~\cref{fig:synthetic_outliers}).

\subsection{MDS Clustering}\label{sec:mds clustering}

We generate $60$ $\D$ matrices in $3$ clusters, each with $20$ points. Then we add normally-distributed noise to generate $60$ $\tilde{\D}$ matrices (see suppl. material). We compute the SNR for each $\tilde{\D}$ via~\cref{eq:snr} and find the mean SNR for the $60$ matrices is $-4.79$ dB, indicating that significant noise has been added to each $\D$. This experiment aims to find the method that best clusters the $\tilde{\D}$ matrices.

We use SVD (with $4$ left singular vectors) and FD (with flag type $(2,4;10)$) on each of the $60$ $\tilde{\D}$ matrices to recover the flag representations. Then the chordal distance is used to generate distance matrices. Finally, MDS visualizes these data in $2$ dimensions. Our additional baseline is run on the Euclidean distance matrix between the flattened $\tilde{\D}$ matrices. We find that FD produces a distance matrix and MDS with most defined clusters in~\cref{fig:synthetic_clustering}.

\begin{figure}[t!]
    \centering
    \includegraphics[width=\linewidth]{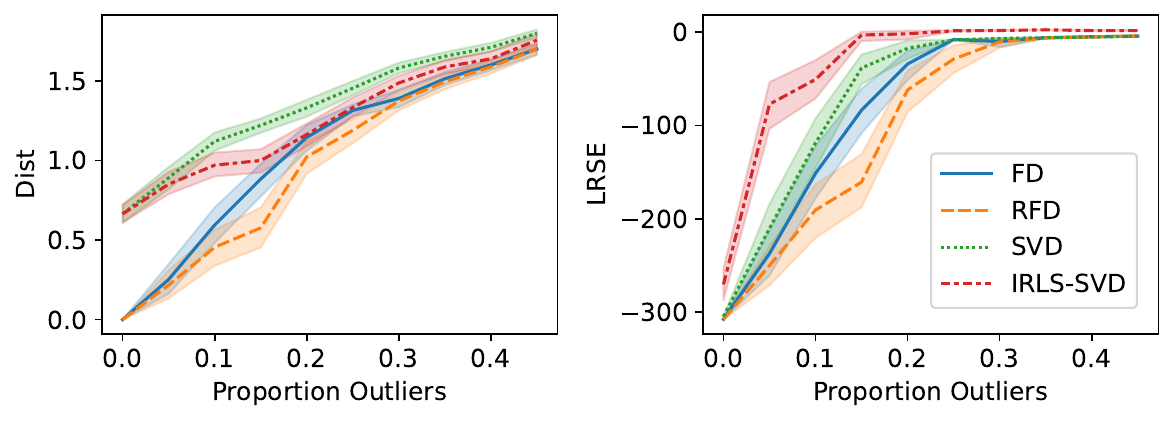}
    \caption{FD and RFD improve flag recovery and reconstruction error over SVD and IRLS-SVD with RFD. LSRE \& Dist are defined in~\cref{tab:metrics} with \emph{Dist} as the chordal distance.}
    \label{fig:synthetic_outliers}
\end{figure}

\begin{figure}[b]
\vspace{-3mm}
    \centering
    \includegraphics[width=\linewidth]{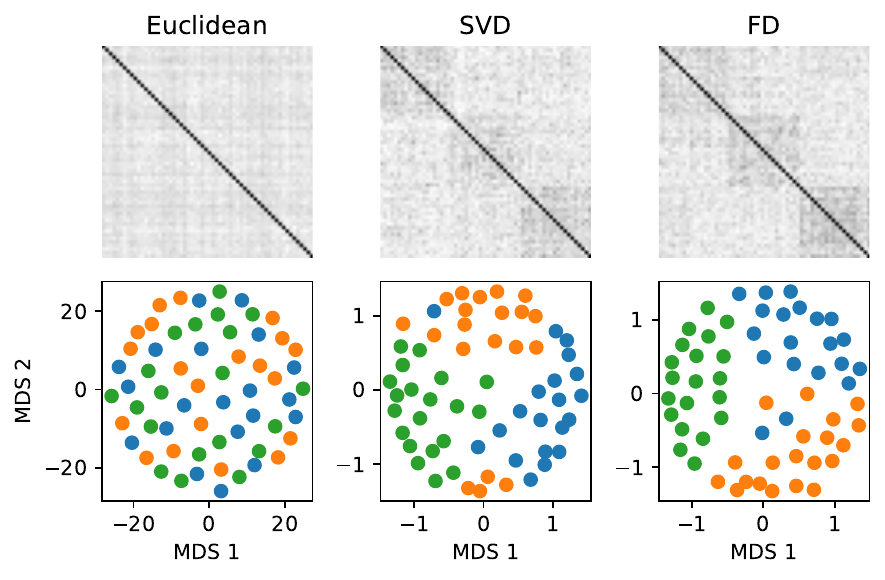}
    \vspace{-4mm}
    \caption{\textbf{(Top row)} distance matrices using Euclidean distance (Euclidean) and chordal distance between flags (SVD and FD). \textbf{(Bottom row)} 2D representation of the data colored by cluster via MDS applied to the distance matrix.}
    \label{fig:synthetic_clustering}
\end{figure}

\subsection{Hyperspectral Image Denoising}\label{sec: denoising_hs}
We consider denoising images captured by the \href{https://aviris.jpl.nasa.gov/}{AVIRIS} hyperspectral sensor. Two hyperspectral images are used for model evaluation: the KSC and Indian Pines datasets~\cite{Bandos09}. KSC is a $(512\times614)$ image with $176$ bands and Indian Pines is a $(145\times145)$ image with $200$ bands. 
We run two experiments, one on each image, by randomly selecting a $50 \times 50$ square and flattening it to generate $\D \in \R^{2500 \times p}$ (pixels as rows and bands as columns). Then, we add mean-zero Gaussian noise of increasing variance to obtain $\tilde{\D}$, on which we run our FD and SVD to denoise. LRSE is measured between $\D$ and the denoised reconstruction $\hat{\D}$ (see~\cref{tab:metrics}) to determine the quality of the reconstruction.

For our FD, we specify a flag of type $(8,9,10;2500)$, and SVD is run using the first $10$ left singular vectors. The hierarchy used as input to our algorithm mirrors the spectrum hierarchy (see~\cref{ex:spectrum_hierarchy}) by assigning $\mathcal{A}_1$ to the first $40$ bands, $\mathcal{A}_2$ to the first $100$ bands, and $\mathcal{A}_3$ to all the bands. We find in~\cref{fig:hsi denoising} that FD consistently improves HSI denoising over SVD. When testing exponential and uniform noise, FD and SVD produce similar quality denoising. 

\begin{figure}[t]
    \vspace{-.3cm}
    \centering
    \includegraphics[width=\linewidth]{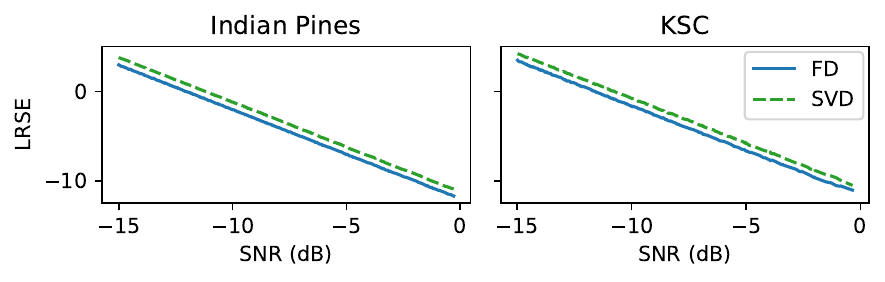}
    \caption{FD improves hyperspectral image denoising over SVD (see SNR~\cref{eq:snr}, LRSE~\cref{tab:metrics}). \vspace{-5mm}}
    \label{fig:hsi denoising}
\end{figure}

\subsection{Hyperspectral Image Clustering}\label{sec: clustering}
We now showcase image patch clustering using the KSC hyperspectral image.  
The data was pre-processed to remove low SNR and water absorption bands, then split into $3 \times 3$ patches of pixels from the same class. Each patch is translated into a $\mathbf{D} \in \R^{176 \times 9}$ (bands as rows and pixels as columns) with the hierarchy described in~\cref{ex:pixel_hierarchy} with $\cA_1$ as the center pixel. A flag recovery method is run on each $\D$ to extract a flag of type $(1,8;176)$. Then, we compute a chordal distance matrix between the collection of flags. Finally, we classify these flags using $k$-nearest neighbors. We compare FD to QR and SVD in~\cref{fig:ksc_knn} and find that FD produces the highest classification accuracy for the number of nearest neighbors between $6$ and $24$.

Instead of using chordal distance between flags to measure distance, we use a sum of Grassmannian chordal distances.
We hypothesize that this is a more suited distance for this example because it is more robust to outlier pixels. Given $[\![\X]\!],[\![\Y]\!] \in \flag(1,8;176)$, we use a chordal distance on the product of Grassmannians that takes advantage of the embedding of $\flag(1,8;176)$ in $Gr(1,176) \times Gr(7,176)$. See our suppl. material for details.

\subsection{Few-shot Learning}\label{sec: fewshot learning}
We deploy FD in few-shot learning using an Alexnet~\cite{krizhevsky2012imagenet}, pre-trained on ImageNet, as the feature extractor $f_{\Theta}: \mathcal{X} \rightarrow \R^{4096}$, admitting the representation $f_{\Theta} = f_{\Theta}^{(1)} \circ f_{\Theta}^{(2)}$ where the range of both $f_{\Theta}^{(1)}$ and $f_{\Theta}^{(2)}$ is $\R^{4096}$. We use the feature hierarchy outlined in~\cref{ex:feature_hierarchy} and the procedure in~\cref{sec:fewshot_flags} to map the support of one class to a flag prototype using FD (see~\cref{fig:fewshot_flags}). The distance between a query point and a flag prototype is~\cref{eq: fewshot_dist}. We call this pipeline a flag classifier. Our baselines include Euclidean~\cite{snell2017prototypical} and subspace~\cite{simon2020adaptive} classifiers. No fine-tuning is used to optimize the feature extractor in any experiments.

Our two baseline methods, Euclidean and subspace, use means and subspaces as prototypes. Specifically, prototypical networks~\cite{snell2017prototypical} are a classical few-shot architecture that uses averages for prototypes and Euclidean distance between prototypes and queries. On the other hand, subspace classifiers from adaptive subspace networks~\cite{simon2020adaptive} use subspaces as prototypes and measure distances between prototypes and queries via projections of the queries onto the prototype. Building upon these baseline methods, we use a flag-based approach (see~\cref{fig:concept,fig:fewshot_flags}). For a fair comparison, baselines stack features extracted by $f^{(1)}_\Theta$ and $f_\Theta$. 

We evaluate flag classifiers on EuroSat~\cite{helber2019eurosat}, CIFAR-10~\cite{krizhevsky2009learning}, and Flowers102~\cite{nilsback2008automated} datasets, and report the average classification accuracy in~\cref{tab:fewshot} over $20$ random trials each containing $100$ evaluation tasks with $10$ query images and $5$ ways per task. We find that flag classifiers perform similarly to subspace classifiers and improve classification accuracy in two cases. Further results are in suppl. material.
\begin{figure}[t]
    \centering
    \includegraphics[width=\linewidth]{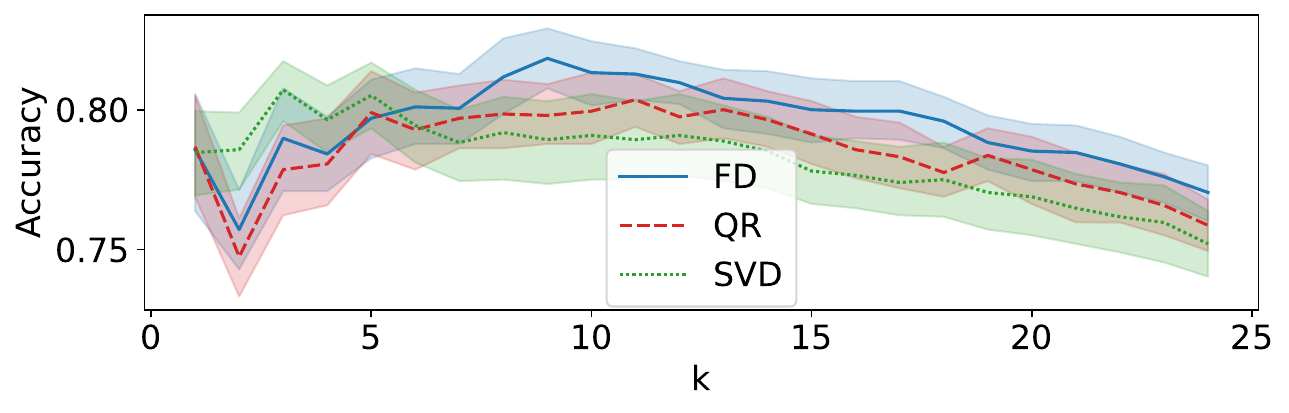}
    \caption{$k$-nearest neighbors \emph{classification accuracies} ($\uparrow$) using chordal distance matrices derived from flag representations of $3 \times 3$ image patches of the KSC dataset over $20$ random trials with a $70\%-30\%$ training-validation split. }
    \label{fig:ksc_knn}
\end{figure}
\setlength{\tabcolsep}{10pt}
\begin{table}[ht!]
    \centering    \caption{\emph{Classification accuracy ($\uparrow$)} with $s$ shots, $5$ ways, and $100$ evaluation tasks each containing $10$ query images, averaged over $20$ random trials. Flag types for `Flag' are $(s-1,2(s-1))$ and the subspace dimension is $s-1$.}
    \label{tab:fewshot}
    \setlength{\tabcolsep}{14pt}
    \resizebox{\columnwidth}{!}{
    \begin{tabular}{llccc}
    \toprule
    $s$ & Dataset & Flag & Euc. & Subsp. \\
    \midrule
    \multirow[t]{3}{*}{3} & EuroSat & \textbf{77.7}  & 76.7  & 77.6  \\
     & CIFAR-10 & \textbf{59.6}  & 58.6  & \textbf{59.6}  \\
     & Flowers102 & \textbf{90.2}  & 88.2  & \textbf{90.2}  \\
    \cline{1-5}
    \multirow[t]{3}{*}{5} & EuroSat & \textbf{81.8}  & 80.7  & \textbf{81.8}  \\
     & CIFAR-10 & \textbf{65.2}  & \textbf{65.2}  & \textbf{65.2}  \\
     & Flowers102 & \textbf{93.2}  & 91.4 & \textbf{93.2}  \\
    \cline{1-5}
    \multirow[t]{3}{*}{7} & EuroSat & \textbf{83.9}  & 82.6  & 83.8  \\
     & CIFAR-10 & 68.0  & \textbf{68.6}  & 68.1  \\
     & Flowers102 & \textbf{94.5} & 92.7 & \textbf{94.5}  \\
    \bottomrule
    \end{tabular}
    }\vspace{-3mm}
\end{table}

\vspace{-3mm}\section{Conclusion}\label{sec:conclusion}

We introduced Flag Decomposition (FD), a novel matrix decomposition that uses flags to preserve hierarchical structures within data. We further proposed~\algname~
to robustly find this decomposition even under noise and outlier contamination and studied its 
properties. With this algorithm, FD augments the machine learning arsenal by providing a robust tool for working with hierarchical data, applicable in tasks like denoising, clustering, and few-shot learning, as demonstrated by our evaluations.

\paragraph{Limitations \& future work} 
Our FD framework is a first step to hierarchy-aware decompositions and leaves ample room for future study. For example, \algname~is prone to the instabilities of Gram-Schmidt and requires a flag type and column hierarchy as inputs. In the future, we will improve algorithms for faster and more stable computation. Next, we plan to automate the flag type detection and explore fine-tuning a feature extractor for few-shot learning with flags. We will also investigate directly learning (latent) hierarchical structures from data.

\paragraph{Acknowledgements}
N. Mankovich thanks Homer Durand, Gherardo Varando, Claudio Verdun, and Bernardo Freitas Paulo da Costa for enlightening conversations on flag manifolds and their applications.
N. Mankovich and G. Camps-Valls acknowledge support from the project "Artificial Intelligence for complex systems: Brain, Earth, Climate, Society" funded by the Department of Innovation, Universities, Science, and Digital Society, code: CIPROM/2021/56. This work was also supported by the ELIAS project (HORIZON-CL4-2022-HUMAN-02-02, Grant No. 101120237), the THINKINGEARTH project (HORIZON-EUSPA-2022-SPACE-02-55, Grant No. 101130544), and the USMILE project (ERC-SyG-2019, Grant No. 855187).
T. Birdal acknowledges support from the Engineering and Physical Sciences Research Council [grant EP/X011364/1].
T. Birdal was supported by a UKRI Future Leaders Fellowship [grant number MR/Y018818/1] as well as a Royal Society
Research Grant RG/R1/241402.
The work of I. Santamaria was partly supported under grant PID2022-137099NB-C43 (MADDIE) funded by MICIU/AEI /10.13039/501100011033 and FEDER, UE.

{\small
\bibliographystyle{ieeenat_fullname} 
\bibliography{refs} 
}

\newpage
\clearpage
\vspace{2mm}
\appendix
\section*{Appendices}


We provide 
alternative methods for flag recovery in~\cref{sec:svd_and_qr},
proofs of each proposition in~\cref{sec:proofs}, a discussion of block matrix decompositions in~\cref{sec:alg_rev}, a formal presentation of the~\algname~algorithm in~\cref{sec:algs}, and additional details for the results in~\cref{sec:extra_experiments}.

\section{SVD and QR for Flag Recovery}\label{sec:svd_and_qr}
The SVD and QR decomposition recover individual subspaces of the flag $[\Q_i]$ and, in certain hierarchies, recover the entire flag $[\![\Q]\!]$. We first discuss SVD and QR for subspace recovery, then we provide examples of using each method for flag recovery.

\subsection{Subspace recovery} 
SVD and QR decomposition can be used to solve the optimization problem
\begin{equation*}
    \Q_i =  \argmin_{\X \in St(m_i,n)} \sum_{j \in \cB_j}\| \boldsymbol{\Pi}_{\X^\perp} \boldsymbol{\Pi}_{\Q^\perp_{i-1}} \cdots \boldsymbol{\Pi}_{\Q^\perp_1} \tilde{\mathbf{d}}_j \|_2^2
\end{equation*}
with $m_i = \mathrm{rank}(\boldsymbol{\Pi}_{\X^\perp} \boldsymbol{\Pi}_{\Q^\perp_{i-1}} \cdots \boldsymbol{\Pi}_{\Q^\perp_1}\B_i)$.
The QR decomposition with pivoting outputs $\boldsymbol{\Pi}_{\Q^\perp_{i-1}} \cdots \boldsymbol{\Pi}_{\Q^\perp_1} \B_i = \Q_i' \bR_i'{\bP_i'}^\top$. We then assign the columns of $\Q_i$ to columns of $\Q_i'$ associated with non-zero rows of $\bR_i'$. Using the SVD, we have $\boldsymbol{\Pi}_{\Q^\perp_{i-1}} \cdots \boldsymbol{\Pi}_{\Q^\perp_1} \B_i = \U_i \bm{\Sigma}_i \V_i^\top$. Then we assign the columns of $\Q_i$ to the columns of $\U_i$ associated with non-zero singular values.

\subsection{Flag recovery} 
Both the SVD and QR decomposition can be used for flag recovery for certain column hierarchies and flag types. In both examples, we take $\D \in \R^{n \times p}$.
\begin{exmp}[QR decomposition]
    For a tall and skinny ($p \leq n$) $\D$ with the column hierarchy $\{1,\dots,p_1\}\subset \{1,\dots,p_2\} \subset \cdots \subset \{1,\dots,p\}$ and the flag type is $(p_1,p_2,\dots,p;n)$, the QR decomposition $\D = \Q \bR$ outputs the hierarchy-preserving flag $[\![\Q]\!] \in \flag(p_1,p_2,\dots,p;n)$ because $[\mathbf{q}_1 | \mathbf{q}_2|\cdots |\mathbf{q}_i] = [\mathbf{d}_1 | \mathbf{d}_2|\cdots |\mathbf{d}_i]=[\D_{\cA_i}]$ for $i=1,2\dots,k$.
\end{exmp}

\begin{exmp}[SVD]
    Suppose $\D$ has the column hierarchy $\{1,\dots,p\}$ and the rank $n_k$. The SVD of $\D$ is $\D =\U \bm{\Sigma} \V^\top$. Let $\Q$ be the $n_k$ left singular vectors (columns of $\U$) associated with non-zero singular values. Then $[\![\Q]\!] \in \flag(n_k;n)$ is a hierarchy-preserving flag because $[\Q] = [\D]$.
\end{exmp}


\section{Theoretical Justifications}\label{sec:proofs}
We prove each proposition from the Methods section. For the sake of flow, we re-state the propositions from the Methods section before providing the proofs. Throughout these justifications we use $\mathrm{rank}(\D)$ as the dimension of the column space of $\D$. This is equivalent to the dimension of the subspace spanned by the columns of $\D$, denoted $\mathrm{dim}([\D])$

\begin{prop}\label{prop:stiefel_coords_app}
    Suppose $\cA_1 \subset \cA_2 \subset \cdots \subset \cA_k$ is a column hierarchy for $\D$. Then there exists $\Q= [\Q_1\,|\,\Q_2\,|\, \cdots\,|\,\Q_k]$ that are coordinates for the flag $[\![\Q]\!]\in\flag(n_1,n_2,\dots,n_k;n)$ where $n_i = \mathrm{rank}(\D_{\mathcal{A}_i})$ that satisfies $[\Q_i] = [\boldsymbol{\Pi}_{\Q_{i-1}^\perp}\cdots \boldsymbol{\Pi}_{\Q_1^\perp}\B_i]$ and the \textbf{projection property} (for $i=1,2\dots,k$): 
    \begin{equation}\label{eq:projection_property_app}
    \boldsymbol{\Pi}_{\Q_i^\perp}\boldsymbol{\Pi}_{\Q_{i-1}^\perp}\cdots \boldsymbol{\Pi}_{\Q_1^\perp} \B_i = \bm{0}.
    \end{equation}
\end{prop}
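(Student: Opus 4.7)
The plan is to construct $\Q_1, \Q_2, \dots, \Q_k$ recursively via a block Gram--Schmidt--style procedure, and then verify the three claims (Stiefel, hierarchy, projection property) simultaneously by induction on $i$. Before starting the induction I would record one elementary lemma: if $\Q_1, \dots, \Q_{i-1}$ have orthonormal columns and are pairwise orthogonal (i.e.\ $\Q_j^\top \Q_{j'} = \bm{0}$ for $j\neq j'$), then the composition of individual projectors telescopes to the projection onto the complement of their joint span,
\begin{equation*}
    \boldsymbol{\Pi}_{\Q_{i-1}^\perp}\cdots \boldsymbol{\Pi}_{\Q_1^\perp} \;=\; \I - \sum_{j=1}^{i-1}\Q_j\Q_j^\top \;=\; \boldsymbol{\Pi}_{[\Q_1,\dots,\Q_{i-1}]^\perp}.
\end{equation*}
This is a straightforward expansion that kills all cross terms $\Q_j \Q_j^\top \Q_{j'} \Q_{j'}^\top$ using mutual orthogonality, and it lets me reason with a single projector at each stage.

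\emph{Base case} ($i=1$). Since $\cA_0 = \emptyset$, we have $\cB_1 = \cA_1$ and $\B_1 = \D_{\cA_1}$, so $\mathrm{rank}(\B_1) = n_1 = m_1$ (as $n_0 = 0$). Pick any $\Q_1 \in St(m_1,n)$ whose columns form an orthonormal basis for $[\B_1]$; then $[\Q_1] = [\boldsymbol{\Pi}_{\Q_0^\perp}\B_1]$ (with $\boldsymbol{\Pi}_{\Q_0^\perp}=\I$) and $\boldsymbol{\Pi}_{\Q_1^\perp}\B_1 = \bm{0}$.

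\emph{Inductive step.} Assume $\Q_1, \dots, \Q_{i-1}$ have been built with orthonormal columns, are pairwise orthogonal, have sizes $m_j = n_j - n_{j-1}$, and satisfy $[\Q_1,\dots,\Q_j] = [\B_1,\dots,\B_j] = [\D_{\cA_j}]$ for each $j<i$. Define $\C_i := \boldsymbol{\Pi}_{\Q_{i-1}^\perp}\cdots \boldsymbol{\Pi}_{\Q_1^\perp}\B_i$. By the telescoping lemma, $\C_i = \boldsymbol{\Pi}_{[\D_{\cA_{i-1}}]^\perp}\B_i$. The crux is to count the dimension of $[\C_i]$: from $[\D_{\cA_i}] = [\D_{\cA_{i-1}}] + [\B_i]$ together with the column-hierarchy assumption $\dim[\D_{\cA_i}] - \dim[\D_{\cA_{i-1}}] = n_i - n_{i-1} = m_i$, the image of $\B_i$ modulo $[\D_{\cA_{i-1}}]$ has dimension exactly $m_i$, so $\dim[\C_i] = m_i$. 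Choose $\Q_i \in St(m_i, n)$ with $[\Q_i] = [\C_i]$ (possible since $\C_i \in \R^{n \times |\cB_i|}$ has column rank $m_i$). Because $[\Q_i] \subseteq [\D_{\cA_{i-1}}]^\perp = [\Q_1,\dots,\Q_{i-1}]^\perp$, the new block is orthogonal to the old ones, so $[\Q_1|\cdots|\Q_i] \in St(n_i, n)$, and $[\Q_1,\dots,\Q_i] = [\Q_1,\dots,\Q_{i-1}] + [\C_i] = [\D_{\cA_{i-1}}] + [\B_i] = [\D_{\cA_i}]$, preserving the hierarchy. Finally, the projection property follows immediately: since $[\Q_i] = [\C_i]$, the residual $\boldsymbol{\Pi}_{\Q_i^\perp}\C_i = \bm{0}$.

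The only nontrivial ingredient is the dimension count $\dim[\C_i] = m_i$, and this is exactly where the hypothesis \textbf{column hierarchy for $\D$} (strict dimension jumps in~\cref{def:col_hierarchy}) is used; everything else is bookkeeping supported by the telescoping identity. The resulting $\Q = [\Q_1|\cdots|\Q_k]$ lies in $St(n_k,n)$ and parameterizes the flag $[\![\Q]\!]\in\flag(n_1,\dots,n_k;n)$ of the claimed type. $\qed$
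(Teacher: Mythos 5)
Your proof follows essentially the same block Gram--Schmidt construction and induction as the paper, and it is correct. The one organizational difference is that you establish $\dim[\C_i]=m_i$ directly via the quotient-dimension count (which automatically gives $\C_i\neq\bm{0}$), whereas the paper first proves $\C_i\neq\bm{0}$ by contradiction (their item ``Non-zero'') and derives the dimension count separately (their item ``Dimensions''); your version is a little tighter but carries no new idea.
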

\begin{proof}
    For $i=1$ define $m_1 = n_1 = \mathrm{rank}(\B_1) = \mathrm{rank}(\D_{\cA_1})$. Now define $\C_1 = \B_1$ and $\Q_1 \in St(m_1,n)$ whose columns are an orthonormal basis for the column space of $\C_1$, specifically $[\Q_1] = [\C_1]$.
    
    \textbf{For ease of notation, denote} $\Q_{:i} = [\Q_1|\Q_2|\cdots|\Q_{i}]$. Define (for $i=2,3,\dots,k$) the projector onto the null space of $[\Q_1,\Q_2,\dots,\Q_i]$, as 
    \begin{equation}
        \boldsymbol{\Pi}_{\Q_{:i}^\perp} = \I - \Q_{:i}\Q_{:i}^\top.
    \end{equation}
    We use this to define $\C_i$ through 
    \begin{equation}
        \C_i = \boldsymbol{\Pi}_{\Q_{:i-1}^\perp}\B_i
    \end{equation} 
    and $\Q_i \in St(m_i,n)$ so that $[\Q_i] = [\C_i]$.

    We use mathematical induction to prove the following:
        \begin{enumerate}
        \item \emph{Non-zero} $\C_i \neq \bm{0}$,
        \item \emph{Coordinates} $\Q_{:i} = [\Q_1|\Q_2| \cdots | \Q_i]$ is in Stiefel coordinates (\eg, $\Q_{:i}\Q_{:i}^\top = \I$),
        \item \emph{Hierarchy} $[\B_1,\B_2,\dots,\B_i] = [\Q_1,\Q_2, \dots, \Q_i]$,
        \item \emph{Projection property} $\boldsymbol{\Pi}_{\Q_i^\perp} \cdots\boldsymbol{\Pi}_{\Q_1^\perp} \B_i = \bm{0}$ and \\
        $\boldsymbol{\Pi}_{\Q_i^\perp} \cdots\boldsymbol{\Pi}_{\Q_1^\perp}= \boldsymbol{\Pi}_{\Q_{:i}^\perp}$, 
        \item \emph{Dimensions} $\Q_i \in St(m_i,n)$ with $m_i = n_i - n_{i-1}$ where $n_i = \mathrm{rank}(\D_{\cA_i})$.
    \end{enumerate}

    We proceed with the base case $i=1$. (1) $\C_1 = \B_1 = \D_{\cA_1} \neq \bm{0}$. (2) $\Q_1 \in St(n_1,n)$ because its columns form an orthonormal basis for $[\C_1]$. (3) $[\B_1] = [\C_1] = [\Q_1]$. (4) Since $\boldsymbol{\Pi}_{\Q_1^\perp}$ projects into the nullspace of $\Q_1$ and $[\Q_1] = [\B_1]$, we have $\boldsymbol{\Pi}_{\Q_1^\perp} \B_1 = \bm{0}$. (5) Since $m_1 = n_1$, $n_1 = \mathrm{\dim}([\Q_1]) = \mathrm{dim}([\B_1]) = \mathrm{dim}([\D_{\cA_1}])$, and the columns of $\Q_1$ form an orthonormal basis, we have $\Q_1 \in St(m_1,n)$.
    
    Fix some $j \in \{2,3,\dots,k\}$. Suppose statements (1-5) hold true for all $i < j$.

    \paragraph{1. Non-zero} By way of contradiction, assume $\C_j = \bm{0}$. Then $\boldsymbol{\Pi}_{\Q_{:j-1}^\perp}\B_j=\bm{0}$. This means each column of $\B_j$ is in the column space of $\Q_{:j-1}$. In terms of subspaces, this implies
    \begin{equation}\label{eq:bj_subset_q}
        [\B_j] \subseteq [\Q_1,\Q_2,\dots,\Q_{j-1}] = [\B_1,\B_2,\dots,\B_{j-1}]
    \end{equation}
    where the second equality follows from the induction hypothesis part 3. \cref{eq:bj_subset_q} implies 
    \begin{equation}\label{eq:b_eq}
        \mathrm{dim}([\B_1,\B_2,\dots,\B_j])
       = \mathrm{dim}([\B_1,\B_2,\dots,\B_{j-1}]).
    \end{equation}
    By construction (see first paragraph of Methods section), 
    \begin{equation}\label{eq:span_constr}
        \mathrm{dim}([\D_{\cA_j}]) = \mathrm{dim}([\B_1,\B_2,\dots,\B_j]).
    \end{equation}
    So~\cref{eq:b_eq,eq:span_constr} imply $\mathrm{rank}(\D_{\cA_j}) = \mathrm{rank}(\D_{\cA_{j-1}})$. This contradicts the assumption of a column hierarchy for $\D$, namely $\mathrm{rank}(\D_{\cA_j}) > \mathrm{rank}(\D_{\cA_{j-1}})$.

    \paragraph{2. Coordinates} It suffices to show 
    \begin{equation}
        \Q_j^\top\Q_{:j-1} = [\Q_j^\top \Q_1| \Q_j^\top \Q_2| \cdots | \Q_j^\top \Q_{j-1}] = \bm{0}
    \end{equation} 
    which is equivalent to showing $[\Q_j]$ is orthogonal to $[\Q_1,\Q_2,\dots,\Q_{j-1}]$. By construction, 
    \begin{equation}
        [\Q_j] = [\C_j] = [\boldsymbol{\Pi}_{\Q_{:j-1}^\perp}\B_j]
    \end{equation} 
    which is orthogonal to $[\Q_1,\dots,\Q_{j-1}]$.

    \paragraph{3. Hierarchy} Using $\Q_j^\top\Q_{:j-1} = \bm{0}$, we have
    \begin{align}\label{eq:proj_prod_prop}
        \begin{aligned}
            \boldsymbol{\Pi}_{\Q_{:j}^\perp} &= \I -\Q_{:j}\Q_{:j}^\top \\
               &= \I -\sum_{\ell = 1}^j\Q_{\ell}\Q_{\ell}^\top\\
               &= \I - \Q_j \Q_j^\top - \Q_{:j-1}\Q_{:j-1}^\top \\
               &= \I - \Q_j \Q_j^\top - \Q_{:j-1}\Q_{:j-1}^\top \\
               &+ \Q_j \underbrace{\Q_j^\top \Q_{:j-1}}_{\bm{0}}\Q_{:j-1}^\top,\\
               &= (\I - \Q_j \Q_j^\top)(\I - \Q_{:j-1}\Q_{:j-1}^\top),\\
               &= \boldsymbol{\Pi}_{\Q^\perp_j} \boldsymbol{\Pi}_{\Q_{:j-1}^\perp}.\\
        \end{aligned}
    \end{align}
    By ~\cref{eq:proj_prod_prop} and the construction $[\Q_j] = [\boldsymbol{\Pi}_{\Q_{:j-1}^\perp}\B_j]$, we have
    \begin{align*}
        \bm{0} &= \boldsymbol{\Pi}_{\Q_j^\perp}\boldsymbol{\Pi}_{\Q_{:j-1}^\perp}\B_j\\
               &= \boldsymbol{\Pi}_{\Q_{:j}^\perp}\B_j,\\
               &= (\I - \Q_{:j}\Q_{:j}^\top)\B_j,\\
        \B_j   &= \Q_{:j}\Q_{:j}^\top \B_j.
    \end{align*}
    Thus $[\B_j] \subseteq [\Q_1,\Q_2,\cdots ,\Q_j]$. By the induction hypothesis (3), $[\B_1,\dots,\B_{j-1}] = [\Q_1,\dots,\Q_{j-1}]$. So $[\B_1,\B_2,\dots,\B_j] \subseteq [\Q_1,\Q_2, \dots, \Q_j]$.
    
    In contrast $[\B_j] \supseteq [\boldsymbol{\Pi}_{\Q_{:j-1}^\perp}\B_j] = [\C_j] = [\Q_j]$. So, also using $[\B_1,\dots,\B_{j-1}] = [\Q_1,\dots,\Q_{j-1}]$ (induction hypothesis 3),  we have $[\B_1,\B_2,\dots,\B_j] \supseteq [\Q_1,\Q_2,\dots,\Q_j]$. 

    \paragraph{4. Projection property} Using~\cref{eq:proj_prod_prop} and the induction hypothesis (4) that $\boldsymbol{\Pi}_{\Q_{:j-1}^\perp} = \boldsymbol{\Pi}_{\Q^\perp_{j-1}} \cdots \boldsymbol{\Pi}_{\Q^\perp_1}$, we have
    \begin{align}\label{eq:proj_identity_proof}
    \begin{aligned}
         \boldsymbol{\Pi}_{\Q_{:j}^\perp} &= \boldsymbol{\Pi}_{\Q^\perp_j} \boldsymbol{\Pi}_{\Q_{:j-1}^\perp},\\
               &= \boldsymbol{\Pi}_{\Q^\perp_j} \boldsymbol{\Pi}_{\Q^\perp_{j-1}}\cdots \boldsymbol{\Pi}_{\Q^\perp_1}.
   \end{aligned}
    \end{align}
    By construction $[\Q_j] = [\boldsymbol{\Pi}_{\Q_{:j-1}^\perp}\B_j]$. Thus 
    \begin{equation*}
        \boldsymbol{\Pi}_{\Q^\perp_j}\boldsymbol{\Pi}_{\Q_{:j-1}^\perp}\B_j = \bm{0}.
    \end{equation*}
    Using~\cref{eq:proj_identity_proof}, we have
    \begin{equation*}
        \boldsymbol{\Pi}_{\Q^\perp_j} \boldsymbol{\Pi}_{\Q^\perp_{j-1}}\cdots \boldsymbol{\Pi}_{\Q^\perp_1}\B_j = \bm{0}.
    \end{equation*}


    \paragraph{5. Dimensions} By the induction hypothesis (5), $\Q_i \in St(m_i,n)$ for $i=1,2,\dots,j-1$. So $\Q_{:j-1} \in St(n_{j-1},n)$ with $n_{j-1} = \sum_{i=1}^{j-1} m_i$. Let 
    \begin{align*}
        n_j &= \mathrm{rank}(\D_{\cA_j}), \\
        &= \mathrm{dim}([\B_1,\B_2,\dots,\B_j]),\\
        &= \mathrm{dim}([\Q_1,\Q_2,\dots,\Q_j]).
    \end{align*}
    Thus $\Q_{:j} \in \R^{n \times n_j}$ and $\Q_j \in \R^{n \times m_j}$ with $m_j = n_j - n_{j-1}$. $\Q_j$ has orthonormal columns by construction, so $\Q_j \in St(m_j,n)$.

    By way of mathematical induction, we have proven (1-5) for all $i=1,2,\dots,k$. Specifically, given a column hierarchy on $\D$, we have found coordinates for a hierarchy-preserving flag $[\![\Q]\!] \in \flag(n_1,n_2,\dots,n_k;n)$ that satisfies the projection property.
\end{proof}

Although we can write $\bR_{i,j} = \Q_i^\top\B_j$ for $j \geq i$, an equivalent definition is provided in~\cref{eq:R_and_P_app} because it is used in~\cref{alg:FD}.

\begin{prop}\label{prop:proj_prop_and_flags_app}
 Suppose $\cA_1 \subset \cA_2 \subset \cdots \subset \cA_k$ is a column hierarchy for $\D$. Then there exists some hierarchy-preserving 
 $[\![\Q]\!] \in \flag(n_1,n_2,\dots, n_k;n)$ (with $n_i = \mathrm{rank}(\D_{\mathcal{A}_i})$) 
 that satisfies the projection property of $\D$ and can be used for a flag decomposition of $\D$ with
    \begin{align}\label{eq:R_and_P_app}
    \begin{aligned}
    \bR_{i,j} &= 
        \begin{cases}
            \Q_i^\top\boldsymbol{\Pi}_{\Q_{i-1}^\perp}\cdots \boldsymbol{\Pi}_{\Q_1^\perp} \B_i, &i=j\\
            \Q_i^\top\boldsymbol{\Pi}_{\Q_{i-1}^\perp}\cdots \boldsymbol{\Pi}_{\Q_1^\perp} \B_j, &i < j
        \end{cases},\\
        \bP_i &= \left[ \,\mathbf{e}_{b_{i,1}}\,|\, \mathbf{e}_{b_{i,2}}\,|\, \cdots\,|\, \mathbf{e}_{b_{i,|\cB_i|}} \right]
    \end{aligned}
    \end{align}
    where $\{b_{i,j}\}_{j=1}^{|\cB_i|} = \cB_i$ and $\mathbf{e}_{b}$ is the $b_{i,j}$$^{\mathrm{th}}$ standard basis vector.
\end{prop}
\begin{proof}
    We define the permutation matrix $\bP = [\bP_1|\bP_2|\cdots |\bP_k]$ in~\cref{eq:R_and_P_app}. Specifically, we assign the non-zero values in each column of $\bP_i$ to be the index of each element in $\mathcal{B}_i$. In summary, $\bP_i$ is defined so that $\D\bP= [\B_1|\B_2|\cdots|\B_k]$ and $\D = \B = [\B_1|\B_2|\cdots|\B_k]\bP^\top$.

    We find the coordinates $[\Q_1|\Q_2|\cdots|\Q_k] \in St(n_k,n)$ for the hierarchy-preserving flag $[\![\Q]\!] \in \flag(n_1,n_2,\dots,n_k;n)$ with $n_k = \mathrm{rank}(\D_{\cA_i})$ that satisfies the projection property using~\cref{prop:stiefel_coords_app}. 
    
    Now, we aim to find $\bR$ so that $\B = \Q \bR$. Using the projection property $\boldsymbol{\Pi}_{\Q_j^\perp}\cdots\boldsymbol{\Pi}_{\Q_1^\perp}\B_j = \bm{0}$ and the identity $\boldsymbol{\Pi}_{\Q_j^\perp}\cdots\boldsymbol{\Pi}_{\Q_1^\perp} = \boldsymbol{\Pi}_{[\Q_1|\cdots|\Q_j]^\perp}$ from~\cref{eq:proj_identity_proof}, we can write 
    \begin{equation}\label{eq:easy_r}
        \B_j = \sum_{i=1}^j\Q_i \Q_i^\top \B_j =  \sum_{i=1}^j\Q_i\bR_{i,j}.
    \end{equation}
    This is equivalent to the projection formulation of $\bR_{i,j}$ in~\cref{eq:R_and_P_app} because (for $i=1,2,\dots,k$),
    \begin{align}\label{eq:projection_qit}
    \begin{aligned}
        &\Q_i^\top\boldsymbol{\Pi}_{\Q_{i-1}^\perp}\cdots \boldsymbol{\Pi}_{\Q_1^\perp} \\
        &= \Q_i^\top\boldsymbol{\Pi}_{[\Q_{i-1}| \cdots|\Q_1]^\perp},\\
        &= \Q_i^\top(\I - [\Q_{i-1}| \cdots|\Q_1][\Q_{i-1}| \cdots|\Q_1]^\top),\\
        &= \Q_i^\top - \underbrace{\Q_i^\top [\Q_{i-1}| \cdots|\Q_1]}_{\bm{0}}[\Q_{i-1}| \cdots|\Q_1]^\top,\\
        &= \Q_i^\top. 
        \end{aligned}
    \end{align}
    Stacking the results from~\cref{eq:easy_r,eq:projection_qit} into block matrices gives $\B = \Q\bR$ with $\bR$ defined in~\cref{eq:R_and_P_app}.
\end{proof}

\begin{prop}
    A data matrix $\D$ admits a flag decomposition of type $(n_1,n_2, \cdots, n_k; n)$ if and only if $\cA_1 \subset \cA_2 \subset \cdots \subset \cA_k$ is a column hierarchy for $\D$.
\end{prop}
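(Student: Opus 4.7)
The plan is to handle the two implications separately, leveraging the two preceding propositions for the forward direction and an inspection of the block structure of the FD for the reverse direction.

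For the forward direction ($\Rightarrow$), I would assume $\cA_1 \subset \cA_2 \subset \cdots \subset \cA_k$ is a column hierarchy for $\D$, meaning $\mathrm{dim}([\D_{\cA_{i-1}}]) < \mathrm{dim}([\D_{\cA_i}])$ strictly for each $i$. Setting $n_i := \mathrm{rank}(\D_{\cA_i})$ gives $n_1 < n_2 < \cdots < n_k$. By \cref{prop:stiefel_coords} applied to this hierarchy, there exist Stiefel coordinates $\Q = [\Q_1\mid\Q_2\mid\cdots\mid\Q_k] \in St(n_k,n)$ with $\Q_i \in St(m_i,n)$ (where $m_i = n_i - n_{i-1}$) satisfying the projection property. \Cref{prop:proj_prop_and_flags} then constructs explicit $\bR$ and $\bP$ via \cref{eq:onlyR} and \cref{eq:onlyP}. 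To complete this direction I need to verify $\D = \Q \bR \bP^\top$ holds: this amounts to checking that $\Q \bR = \B = \D\bP$ block-column by block-column. The $j$-th block column of $\Q\bR$ is $\sum_{i \le j} \Q_i \Q_i^\top \boldsymbol{\Pi}_{\Q_{i-1}^\perp} \cdots \boldsymbol{\Pi}_{\Q_1^\perp} \B_j$, which telescopes via the identity $\boldsymbol{\Pi}_{\Q_{i}^\perp} = \boldsymbol{\Pi}_{\Q_{i-1}^\perp} - \Q_i\Q_i^\top \boldsymbol{\Pi}_{\Q_{i-1}^\perp}\cdots\boldsymbol{\Pi}_{\Q_1^\perp}$ (using orthogonality between the $\Q_i$'s) to yield $\B_j - \boldsymbol{\Pi}_{\Q_j^\perp}\cdots\boldsymbol{\Pi}_{\Q_1^\perp}\B_j = \B_j$ by the projection property.

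For the reverse direction ($\Leftarrow$), assume $\D = \Q \bR \bP^\top$ is a flag decomposition of type $(n_1,\ldots,n_k;n)$. Multiplying on the right by $\bP$, we obtain $\D\bP = \Q\bR$, so the columns of $\D\bP$ corresponding to the first $n_i$ block-rows of $\bR$ (i.e.\ the columns in $\cA_i$ after permutation, which is precisely $\B_1 | \cdots | \B_i$) lie in $[\Q_1, \Q_2, \ldots, \Q_i]$ by the block upper-triangular structure of $\bR$. Consequently $[\D_{\cA_i}] \subseteq [\Q_1,\ldots,\Q_i]$, giving $\mathrm{dim}([\D_{\cA_i}]) \le n_i$. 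For the opposite inequality I need that $\bR$ is in fact rank $n_k$ on these blocks; this follows because $\Q \in St(n_k,n)$ has orthonormal columns and so is injective, meaning $\mathrm{rank}(\D_{\cA_i}) = \mathrm{rank}(\bR_{[:n_i,\,\text{cols in } \cB_1 \cup \cdots \cup \cB_i]})$, which must equal $n_i$ (otherwise we contradict the strict block-upper-triangular flag structure giving $[\![\Q]\!] \in \flag(n_1,\ldots,n_k;n)$ with strictly nested subspaces). Since $n_1 < n_2 < \cdots < n_k$, we conclude $\mathrm{dim}([\D_{\cA_{i-1}}]) < \mathrm{dim}([\D_{\cA_i}])$, establishing the column hierarchy.

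The main obstacle is the reverse direction, where one must carefully justify that the \emph{type} $(n_1,\ldots,n_k;n)$ of the FD is genuinely attained---that is, that each $\Q_i$ block contributes new rank $m_i > 0$ rather than being vacuous. This hinges on the implicit requirement in the definition of FD that $[\![\Q]\!]$ is a bona fide flag in $\flag(n_1,\ldots,n_k;n)$ with strictly nested subspaces of dimensions $n_1 < \cdots < n_k$; this must be unpacked explicitly so that the dimension-matching argument closes. The forward-direction telescoping identity is routine but worth writing out once to be sure the block-upper-triangular reconstruction of $\B$ is correct.
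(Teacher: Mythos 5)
Your overall structure matches the paper's: both directions are handled separately, and the existence direction (column hierarchy $\Rightarrow$ FD exists) is correctly routed through \cref{prop:stiefel_coords} and \cref{prop:proj_prop_and_flags}, with your telescoping check essentially re-deriving what is already asserted inside the proof of \cref{prop:proj_prop_and_flags}. That half is sound.

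The other direction (FD exists $\Rightarrow$ column hierarchy) has a genuine gap. You try to force $\mathrm{rank}(\D_{\cA_i}) = n_i$ purely from the block upper-triangular shape of $\bR$ together with $[\![\Q]\!] \in \flag(n_1,\dots,n_k;n)$, claiming that a rank-deficient $\bR_{[1:n_i,\cdot]}$ would ``contradict the strict block-upper-triangular flag structure.'' It would not: $[\![\Q]\!]$ being an element of the flag manifold is a property of $\Q$ alone and places no constraint on $\bR$. Concretely, taking $\D$ with two identical columns, $\Q=[\Q_1\,|\,\Q_2]\in St(2,n)$ with $\Q_2\perp\Q_1$, $\bR=\begin{psmallmatrix}1&1\\0&0\end{psmallmatrix}$, and $\bP=\I$ gives a block upper-triangular factorization with $[\![\Q]\!]\in\flag(1,2;n)$, yet $\mathrm{rank}(\D_{\cA_2})=1<2$. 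What actually closes the argument, and what the paper uses, is the requirement built into the definition of FD that $[\![\Q]\!]$ be \emph{hierarchy-preserving}, i.e.\ $[\D_{\cA_i}]=[\Q_1,\dots,\Q_i]$. With that in hand, $\mathrm{dim}([\D_{\cA_i}])=n_i$ follows immediately, and since the $n_i$ strictly increase, so do the dimensions, giving the column hierarchy. Your closing paragraph correctly flags that this direction is where the difficulty lies, but misdiagnoses the missing ingredient as the flag type of $\Q$ rather than the hierarchy-preserving condition relating $\Q$ to $\D$.
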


\begin{proof}
    We first tackle the forward direction. Suppose $\D$ admits a flag decomposition with the hierarchy $\cA_1 \subset \cA_2 \subset \cdots \subset \cA_k$. Then $\D = \Q \bR \bP^\top$ and $\D \bP = \Q \bR$ because $\bP$ is a permutation matrix. 
    Define 
    \begin{equation}
        \B = [\B_1| \B_2 | \cdots | \B_k] = \D \bP = \Q\bR.
    \end{equation} 
    Since we have a flag decomposition, $[\![\Q]\!] \in \flag(n_1,n_2,\dots,n_k;n)$ with $\Q = [\Q_1|\Q_2|\cdots|\Q_k] \in St(n_k,n)$. Since $\Q$ is in Stiefel coordinates we have $\Q_{j}^T\Q_{i} = \bm{0}$ for all $j < i$, so
    \begin{equation}\label{eq:q_ineq}
        \mathrm{dim}([\Q_1, \Q_2, \dots,\Q_{i-1}]) < \mathrm{dim}([\Q_1, \Q_2, \dots,\Q_i]).
    \end{equation} 
    Since $[\![\Q]\!]$ is hierarchy preserving, for $i=1,2,\dots,k$ we have $[\B_1,\B_2,\dots,\B_i] = [\Q_1, \Q_2, \dots,\Q_i]$. Using this and ~\cref{eq:q_ineq}, we have
    \begin{equation}\label{eq:dim_ineq_b}
        \mathrm{dim}([\B_1, \B_2, \dots,\B_{i-1}]) < \mathrm{dim}([\B_1, \B_2, \dots,\B_i]).
    \end{equation}
    By construction $\mathrm{dim}([\B_1,\B_2,\dots,\B_i]) = \mathrm{dim}([\D_{\cA_i}])$. So, using~\cref{eq:dim_ineq_b}, we have shown $\mathrm{dim}([\D_{\cA_{i-1}}]) < \mathrm{dim}([\D_{\cA_i}])$ proving $\cA_1 \subset \cA_2 \subset \cdots \subset \cA_k$ is a column hierarchy for $\D$.

    The backward direction is proved in~\cref{prop:stiefel_coords_app,prop:proj_prop_and_flags_app}. Specifically, given a data matrix with an associated column hierarchy,~\cref{prop:stiefel_coords_app} describes how to find a hierarchy-preserving flag. Then~\cref{prop:proj_prop_and_flags_app} shows how to find the permutation matrix $\bP$ from the column hierarchy and the weight matrix $\bR$ from $\Q$ so that $\D = \Q \bR \bP^\top$.
    \end{proof}


Recall the two optimization problems proposed in the Methods section:
\begin{equation}\label{eq:general_opt_app}
    [\![\Q]\!] = \argmin_{[\![\X]\!] \in \flag(n_1,n_2, \dots, n_k;n)} \sum_{i=1}^k \sum_{j \in \cB_i}\| \boldsymbol{\Pi}_{\X_i^\perp} \cdots \boldsymbol{\Pi}_{\X_1^\perp} \tilde{\mathbf{d}}_j \|_r^q,
\end{equation}
\begin{equation}\label{eq:iterative_opt_app}
    \Q_i =  \argmin_{\X \in St(m_i,n)} \sum_{j \in \cB_j}\| \boldsymbol{\Pi}_{\X^\perp} \boldsymbol{\Pi}_{\Q^\perp_{i-1}} \cdots \boldsymbol{\Pi}_{\Q^\perp_1} \tilde{\mathbf{d}}_j \|_r^q.
\end{equation}

\begin{prop}[Block rotational ambiguity]
    Given the FD $\D = \Q \bR \bP^\top$, any other Stiefel coordinates for the flag $[\![\Q]\!]$ produce an FD of $\D$ (via~\cref{prop:proj_prop_and_flags_app}). Furthermore, different Stiefel coordinates for $[\![\Q]\!]$ produce the same objective function values in~\cref{eq:general_opt_app,eq:iterative_opt_app} (for $i=1,\cdots,k$).
\end{prop}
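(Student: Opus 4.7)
The plan is to exploit the quotient description of the flag manifold recalled in the preliminaries: $\flag(n_1,\dots,n_k;n) \cong St(n_k,n)/(O(m_1) \times \cdots \times O(m_k))$. Two Stiefel representatives $\Q$ and $\Q'$ of the same flag $[\![\Q]\!]$ must therefore differ by right multiplication with a block-diagonal orthogonal matrix $\M = \mathrm{diag}(\M_1,\dots,\M_k)$ where $\M_i \in O(m_i)$; equivalently, $\Q_i' = \Q_i \M_i$ for each $i=1,\dots,k$. First I would record this parametrization and then observe the key algebraic identity flagged in the proof sketch: the subspace projector is gauge-invariant, i.e.\ $\Q_i' {\Q_i'}^\top = \Q_i \M_i \M_i^\top \Q_i^\top = \Q_i \Q_i^\top$, which immediately gives $\boldsymbol{\Pi}_{{\Q_i'}^\perp} = \boldsymbol{\Pi}_{\Q_i^\perp}$ for every $i$.

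For the first assertion, I would apply \cref{prop:proj_prop_and_flags_app} with $\Q'$ in place of $\Q$. The hypotheses of that proposition (hierarchy preservation and the projection property \cref{eq:projection_property_app}) are expressed entirely through the projectors $\boldsymbol{\Pi}_{\Q_i^\perp}$, which by the previous paragraph coincide for $\Q$ and $\Q'$; also $[\![\Q']\!] = [\![\Q]\!]$ is the same hierarchy-preserving flag of the same type. The proposition then produces a block upper-triangular $\bR'$ with $\bR_{i,j}' = {\Q_i'}^\top \boldsymbol{\Pi}_{{\Q_{i-1}'}^\perp}\cdots \boldsymbol{\Pi}_{{\Q_1'}^\perp}\B_j$ so that $\D = \Q' \bR' \bP^\top$, using the same permutation $\bP$ (which is built from the column hierarchy alone, independently of $\Q$).

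For the second assertion, I would note that both objective functions in \cref{eq:general_opt_app} and \cref{eq:iterative_opt_app} depend on the candidate coordinates only through the composed null-space projectors $\boldsymbol{\Pi}_{\X_i^\perp}\cdots \boldsymbol{\Pi}_{\X_1^\perp}$. Since each individual projector $\boldsymbol{\Pi}_{\X_i^\perp}$ is, as shown above, invariant under the $O(m_1)\times\cdots\times O(m_k)$-action, so are their compositions and hence each summand $\|\boldsymbol{\Pi}_{\X_i^\perp}\cdots \boldsymbol{\Pi}_{\X_1^\perp}\tilde{\mathbf{d}}_j\|_r^q$. Summing over $i$ and $j$ preserves this invariance, giving identical objective values.

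There is no serious obstacle here; the proposition is essentially a bookkeeping statement that every ingredient of the decomposition — the permutation $\bP$, the block-triangular $\bR$ formula, and the two cost functions — has been written in a manifestly gauge-invariant (projector-based) form. The only care required is to verify this invariance for each ingredient, which follows uniformly from the single identity $\Q_i \M_i (\Q_i \M_i)^\top = \Q_i \Q_i^\top$.
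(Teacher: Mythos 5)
Your argument is correct and follows essentially the same route as the paper's proof: both identify the block-diagonal $O(m_1)\times\cdots\times O(m_k)$ gauge freedom from the quotient description, reduce everything to the single invariance $\Q_i\M_i(\Q_i\M_i)^\top = \Q_i\Q_i^\top$ of the projectors, and deduce from this that the projection property, hierarchy preservation, the $\bR,\bP$ formulas, and both objective functions are all unchanged. The paper merely spells out the telescoping identity $\B_j = \sum_{i\le j}(\Q_i\M_i)\bR^{(\M)}_{i,j}$ explicitly rather than invoking \cref{prop:proj_prop_and_flags_app} as you do, but the content is the same.
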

\begin{proof}
    The flag manifold $\flag(n_1,n_2,\dots,n_k;n)$ is diffeomorphic to $St(n_k,n)/(O(m_1)\times \cdots \times O(m_k))$ where $m_i = n_i - n_{i-1}$. Suppose $\D = \Q \bR \bP^\top$ is a flag decomposition. Consider $\Q \mathbf{M} \in St(n_k,n)$ with $\mathbf{M} = \mathrm{diag}([\mathbf{M}_1|\mathbf{M}_2|\cdots | \mathbf{M}_k]) \in O(m_1)\times \cdots \times O(m_k)$, meaning $\M_i \in O(m_i)$ for $i=1,2,\dots,k$. 
    
    Notice $\Q$ and $\Q \mathbf{M}$ are coordinates for the same flag, $[\![\Q]\!]=[\![\Q\M]\!]$. 
    
    The key property for this proof is that right multiplication by $\M_i$ does not change projection matrices $\Q_i\Q_i^\top$. Specifically $\Q_i \Q_i^\top = \Q_i\M_i (\Q_i\M_i)^\top$ for $i=1,2,\dots,k$. 
    
    Both $\Q$ and $\M\Q$ satisfy the projection property relative to $\D$ because (for $i=1,2,\dots,k$)
    \begin{equation}\label{eq:proj_mat_equiv}
    \boldsymbol{\Pi}_{\Q_i^\perp} = \I - \Q_i \Q_i^\top = \I - \Q_i \mathbf{M}_i (\mathbf{M}_i \Q_i)^\top = \boldsymbol{\Pi}_{(\Q_i\mathbf{M}_i)^\perp}
    \end{equation}
    which implies that the objective function values in~\cref{eq:general_opt_app,eq:iterative_opt_app} (for $i=1,2,\dots,k$) are the same for $\Q$ and $\Q\mathbf{M}$. Additionally,~\cref{eq:proj_mat_equiv} implies
    \begin{align}
        \bm{0} &=\boldsymbol{\Pi}_{\Q_i^\perp}\boldsymbol{\Pi}_{\Q_{i-1}^\perp}\cdots \boldsymbol{\Pi}_{\Q_1^\perp} \B_i \\
        &= \boldsymbol{\Pi}_{(\Q_i\mathbf{M}_i)^\perp}\boldsymbol{\Pi}_{(\Q_{i-1}\mathbf{M}_{i-1})^\perp}\cdots \boldsymbol{\Pi}_{(\Q_1\mathbf{M}_1)^\perp} \B_i.
    \end{align}
    
    Since $[\![\Q]\!]$ is hierarchy-preserving and rotations do not change subspaces, we have $[\Q_1,\Q_2, \dots, \Q_i] = [\Q_1\M_1,\Q_2\M_2,\dots,\Q_i \M_i]$. Thus $[\![\Q \M]\!]$ is hierarchy-preserving.
    
    Define $\bR^{(\mathbf{M})}$ with blocks $\bR^{(\mathbf{M})}_{i,j} = (\Q_i\mathbf{M}_i)^\top\B_j$. Notice 
    \begin{align}
        \B_j &= \sum_{i=1}^j \Q_i \bR_{i,j},\\
             &= \sum_{i=1}^j \Q_i \Q_i^\top  \B_j,\\
             &= \sum_{i=1}^j (\Q_i \M_i) (\Q_i\M_i)^\top \B_j,\\
             &= \sum_{i=1}^j (\Q_i \M_i)\bR^{(\mathbf{M})}_{i,j}.
    \end{align}
    Thus $\D = (\Q\mathbf{M})\bR^{(\mathbf{M})} \bP^\top$ is a hierarchy-preserving flag decomposition.
\end{proof}

\section{Relationship to MLMD~\cite{ong2016beyond}}\label{sec:alg_rev}
The Multiscale Low Rank Matrix Decomposition (MLMD)~\cite{ong2016beyond} models $\D = \sum_i \X_i$ where each block low-rank matrix $\X_i$ models finer-grained features than $\X_{i+1}$. Suppose $\D = [\B_1|\B_2]\in \R^{n \times p}$ is of rank $n_k$ with columns sorted according to the hierarchy $\cA_1 \subset \cA_2$. The FD with flag type $(n_1,n_2;n)$ is $\D = \Q \bR$ where $\Q = [\Q_1 | \Q_2] \in St(n_k,n)$, $\Q_1 \in \R^{n \times n_1}$, and $\bR$ is block upper triangular. FD does not seek block low-rank representations for different scales, rather it extracts a hierarchy-preserving flag $[\![\Q]\!] \in \flag(n_1,n_2;n)$. Moreover, MLMD partitions $\D$ into column blocks requiring the block partition $P_2$ to be an `order of magnitude' larger than $P_1$ ($1$st par. Sec. II). 
FD is more general and free of this restriction. 
MLMD models $\D = \X_1 + \X_2$ with $\X_i = \sum_{b \in P_i} R_b (\U_b \boldsymbol{\Sigma}_b \V^\top_b)$ where $R_b$ is a block reshaper. The output would be $3$ bases (in each $\U_b$), two for the columns of $\B_1$ and $\B_2$, and one for all of $\D$. These are neither mutually orthogonal nor guaranteed to be hierarchy-preserving. FD outputs one basis in the columns of $\Q=[\Q_1|\Q_2]$ are hierarchy-preserving: $[\Q_1]=[\B_1]$, and $[\Q] = [\D]$.

\section{Algorithms}\label{sec:algs}
Our get\_basis algorithm extracts $\Q_i \in St(m_i,n)$ from $\mathbf{C}_i \in \R^{n \times |\cB_i|}$ so that $[\Q_i] = [\C_i]$ by solving the optimization inspired by~\cref{eq:iterative_opt_app}:
\begin{equation}
    \Q_i = \argmin_{\X \in St(m_i,n)} \sum_{j=1}^{|\cB_i|}\| \boldsymbol{\Pi}_{\X^\perp} \mathbf{c}_j^{(i)}\|_2^q
\end{equation}
for $q=1,2$. We use $\mathbf{c}_j^{(i)}$ to denote the $j$th column of $\C_i$ and $\cB_i = \cA_i \setminus \cA_{i-1}$. A naive implementation of IRLS-SVD addresses $q=1$ and SVD addresses $q=2$.

\begin{algorithm}[ht!]
\caption{get\_basis}\label{alg:get_basis}
 \textbf{Input}: {$\mathbf{C}_i \in \R^{n \times |\cB_i|}$, $m_i \in \R$ (optional)}\\
 \textbf{Output}: {$\X_i \in \R^{m_i}$} \\[0.25em]
     \If{SVD}{
           $\U \boldsymbol{\Sigma} \V^T \gets \mathrm{SVD}(\C_i)$;\\
           \If{$m_i$ is none}
           {
           $m_i \gets \mathrm{rank}(\C_i)$;\\
           }
           $\Q_i \gets \U(1:\mathrm{end},1:m_i)$;\\
        }
     \If{IRLS-SVD}{
        \While{not converged}{
            \For{$j \gets 1$ \KwTo $|\cB_i|$}{
            $\mathbf{c}_j^{(i)} \gets  \C_i(1:\mathrm{end},j)$;\\
            $w_j \gets \mathrm{max}\left(\|\mathbf{c}_j^{(i)}-  \Q_i \Q_i^\top \mathbf{c}_j^{(i)}\|_2,10^{-8}\right)^{-1/2}$;\\
            }
            $\mathbf{W}_i \gets\mathrm{diag}(w_1,w_2,\dots,w_{|\cB_i|})$;\\
            $\U \boldsymbol{\Sigma} \V^T \gets \mathrm{SVD}(\C_i\mathbf{W}_i)$;\\
           \If{$m_i$ is none}{
           $m_i \gets \mathrm{rank}(\C_i\mathbf{W}_i)$;\\
           }
           $\Q_i \gets \U(1:\mathrm{end},1:m_i)$;\\
        }
        }
\end{algorithm}

\algname~is essentially BMGS~\cite{jalby1991stability} with a different get\_basis function. The get\_basis in~\cref{alg:get_basis} is used at each iteration of~\algname~to extract a $\Q_i \in St(m_i,n)$ so that $[\Q_i] = [\boldsymbol{\Pi}_{\Q_{i-1}^\perp} \cdots \boldsymbol{\Pi}_{\Q_1^\perp}\B_i]$. The second nested for loop in~\algname~defines $\bR_{i,j}$ using~\cref{eq:R_and_P_app} and updates $\B_j$ so that, at iteration $i$, we take run $\mathrm{get\_basis}$ on $\C_i = \boldsymbol{\Pi}_{\Q_{i-1}^\perp} \cdots \boldsymbol{\Pi}_{\Q_1^\perp}\B_i$. 

\begin{algorithm}[ht!]
\caption{\algname}\label{alg:FD}
 \textbf{Input}: {A data matrix $\mathbf{D} \in \R^{n \times p}$, \\
 c. hierarchy $\cA_1\subset \cA_2 \subset \cdots \subset \cA_k = \{1,2,\dots,p\}$, \\
 flag type $(n_1,n_2,\cdots,n_k;n)$ with $n_k \leq p$}\\
 \textbf{Output}: {Hierarchy-preserving flag $[\![\Q]\!] \in \flag(n_1,n_2,\dots,n_k;n)$, \\
 weights $\bR \in \R^{n_k \times p}$, perm. mat. $\bP \in \R^{p \times p}$} \\
 with $\D = \Q \bR \bP^\top$\\[0.25em]
     \For{$i\gets1$ \KwTo $k$}{
         $\cB_i \gets \cA_i \setminus \cA_{i-1}$; \\
         $\mathbf{B}_i  \gets \mathbf{D}(1:\mathrm{end},\cB_i) \in \R^{n \times |\cB_i|}$;\\
         $\bP_i \gets  \left[ \mathbf{e}_{b_{i,1}}| \mathbf{e}_{b_{i,2}}| \cdots| \mathbf{e}_{b_{i,|\cB_i|}} \right]$
        }
     \For{$i\gets1$ \KwTo $k$}{
         $m_i \gets n_i-n_{i-1}$;\\
         $\Q_i\gets \mathrm{get\_basis}(\B_i, m_i)$;\\
         $\bR_{i,i} \gets \Q_i^\top \B_i$;\\
         \For{$j\gets i+1$ \KwTo $k$}{
             $\bR_{i,j} \gets \Q_i^\top \B_j$; \%assign $\bR_{i,j}$\\
             $\B_j \gets \B_j - \Q_i \bR_{i,j}$; \%project: $\B_j$ into nullspace of $\Q_i$\\
         }
        }
     $\Q \gets \begin{bmatrix}\Q_1 | \Q_2 | \cdots | \Q_k\end{bmatrix}$;\\
     $\bR \gets \begin{bmatrix}
            \bR_{11} & \bR_{12} & \cdots & \bR_{1k}\\
            \mathbf{0} & \bR_{22} & \cdots & \bR_{2k}\\
            \vdots & \vdots & \ddots & \vdots\\
            \mathbf{0} & \mathbf{0} & \cdots & \bR_{kk} \end{bmatrix}$;\\
     $\bP \gets \begin{bmatrix} \bP_1 | \bP_2 | \cdots | \bP_k \end{bmatrix}$;
\end{algorithm}
\begin{remark}[~\algname~operations count]
    In this remark, we use $O$ to denote big-$O$ notation and not the orthogonal group. We also denote $b_i = |\cB_i|$ so $\B_i \in \R^{n \times b_i}$.
    
     The operations count for the SVD of a matrix $\B_i$ is $O(nb_i\mathrm{min}(n,b_i))$. FD runs $k$ SVDs for each piece of the column hierarchy. Thus its operations count is $O\left(n\sum_{i=1}^kb_i\mathrm{min}(n,b_i)\right)$.

    The IRLS-SVD operations count is $O(c_inb_i\mathrm{min}(n,b_i))$ where $c_i$ is the number of iterations until convergence for IRLS-SVD on $\B_i$. Since IRLS-SVD is run $k$ times in Robust FD, the operations count is $O\left(n\sum_{i=1}^kc_ib_i\mathrm{min}(n,b_i)\right)$.
\end{remark}

We summarize the properties for flag recovery methods in~\cref{tab:alg_table2}.
\begin{table}[ht!]
    \centering
    \caption{A summary of flag recovery methods and their properties.}
    \footnotesize
    \resizebox{\columnwidth}{!}{
    \label{tab:alg_table2}
    \begin{tabular}{c|c@{\:\:\:\:\:\:\:\:}c@{\:\:}c@{\:\:}c@{\:\:\:\:\:\:\:\:}c}
        \toprule
        Decomp. & QR & SVD & IRLS-SVD & FD  & RFD\\
        \midrule
        Robust & \xmark & \xmark & \cmark & \xmark & \cmark \\ 
        Order-pres. & \cmark & \xmark & \xmark & \cmark & \cmark \\ 
        Flag-type & \xmark & \xmark & \xmark & \cmark & \cmark \\ 
        Hier.-pres. & \xmark & \xmark & \xmark & \cmark & \cmark \\ 
        \bottomrule
    \end{tabular}
    }
\end{table}

\section{Results}\label{sec:extra_experiments}
We first describe data generation for each simulation. Then we provide details on the hyperspectral image clustering experiment and confidence intervals for few-shot learning.

\subsection{Reconstruction Simulations}
We consider either additive noise to the data or data contamination with outliers. For both experiments, we generate a Stiefel matrix $[ \X_1 | \X_2 ] = \X \in St(10,4)$ that represents $[\![ \X ]\!] \in \flag(2,4;10)$. Then we generate the data matrix $\D$ with the feature hierarchy $\mathcal{A}_1 \subset \mathcal{A}_2 = \{ 1,2, \cdots, 20\} \subset \{1,2, \cdots, 40\}$. We attempt to recover $[\![ \X]\!]$ and $\D = [\B_1| \B_2] \in \R^{10 \times 40}$ using FD and Robust FD with a flag type of $(2,4;10)$, and the first $4$ left singular vectors from SVD. We evaluate the estimated $[\![\hat{\X}]\!]$ and $\hat{\D}$ using chordal distance and LRSE.

\paragraph{Additive noise}
We consider the following model for $\D$:
\begin{equation*}
    \mathbf{d}_i = \begin{cases}
        \X_1 \mathbf{s}_{1i}, & i \in \mathcal{B}_1 = \{1,\cdots, 20\}\\
        \X\mathbf{s}_{2i}, & i \in \mathcal{B}_2 = \{21,\cdots, 40\}\\
    \end{cases}
\end{equation*}
where each entry of $\mathbf{s}_{ji}$ from a normal distribution with mean $0$ and variance $1$. We contaminate $\D$ with noise by $\tilde{\D} = \D  + \boldsymbol{\epsilon}$ where $\boldsymbol{\epsilon}$ is sampled from either a normal, exponential, or uniform distribution of increasing variance. The goal is to recover $\D$ and $\X$ from $\tilde{\D}$. FD and Robust FD improve flag recovery over SVD and produce similar reconstruction errors. 

\paragraph{Outliers columns}
We randomly sample a subset of columns of $\tilde{\D}$ to be in the set of outliers $\mathcal{O}$. Each outlier column is in the nullspace of $[\X]$ and each entry of $\mathbf{o}_i$ is sampled from a normal distribution with mean $0$ and variance $1$. We use the same scheme as the additive noise case to sample $\mathbf{s}_{ji}$. Using these quantities, we sample the $i^{\mathrm{th}}$ column of $\tilde{\D}$ as
\begin{equation*}
    \tilde{\mathbf{d}}_i = \begin{cases}
        \X_1\mathbf{s}_{1i}, & i \in \mathcal{B}_1\setminus \mathcal{O}\\
        \X\mathbf{s}_{2i}, & i \in \mathcal{B}_2\setminus \mathcal{O}\\
        (\I - \X\X^T)\mathbf{o}_i, & i \in \mathcal{O}.\\
    \end{cases}
\end{equation*}
We define $\D$ as the matrix containing only inlier columns of $\tilde{\D}$. We attempt to recover $\D$ and $[\![\hat{\X}]\!]$ from $\tilde{\D}$. We measure the chordal distance between our estimated $[\![\hat{\X}]\!]$ and $[\![\X]\!]$ and the LRSE between our inlier estimates $\hat{\D}$ and $\D$.


\subsection{Clustering Simulation}
We generate three Stiefel matrices to serve as centers of our clusters $\left[ \X_1^{(c)} | \X_2^{(c)} \right]= \X^{(c)} \in St(4,10)$ that represent $[\![ \X^{(c)} ]\!] \in \flag(2,4;10)$ for $c = 1,2,3$. We use each of these centers to generate $20$ $\D$-matrices with the feature hierarchy $\mathcal{A}_1 = \{ 1,2, \cdots, 20\}$, $\cA_2 = \{1,2, \cdots, 40\}$ in each cluster. The $i$th column in cluster $c$ of the data matrix $\D_i^{(c)}$ is generated as
\begin{equation}
    \mathbf{d}_i^{(c)} = \begin{cases}
        \X_1^{(c)}\mathbf{s}_{1i} , & i \in \mathcal{B}_1\\
        \X^{(c)}\mathbf{s}_{2i} , & i \in \mathcal{B}_2.\\
    \end{cases}
\end{equation}
Then we generate the detected data matrices as $\tilde{\D}_i^{(c)} = \D_i^{(c)}+\boldsymbol{\epsilon}_{i}^{(c)}$. We sample $\boldsymbol{\epsilon}_{1i}^{(c)}$ and $\boldsymbol{\epsilon}_{2i}^{(c)}$ from a normal distribution with mean $0$ and standard deviation $.95$ and $\mathbf{s}_{1i}$ and $\mathbf{s}_{2i}$ from a normal distribution with mean $0$ and standard deviation $1$.

\subsection{Hyperspectral image clustering}
A total of 326 patches were extracted, each with a shape of ($3 \times 3$), with the following distribution: 51 patches of class Scrub, 7 of Willow swamp, 12 of Cabbage palm hammock, 10 of Cabbage palm/oak hammock, 11 of Slash pine, 13 of Oak/broad leaf hammock, 7 of Hardwood swamp, 20 of Graminoid marsh, 39 of Spartina marsh, 25 of Cattail marsh, 29 of Salt marsh, 24 of Mudflats, and 78 of Water.

In this experiment, we measure the distance between two flags $[\![\X]\!],[\![\Y]\!]$ as
\begin{equation}
    \frac{1}{\sqrt{2}}\|\X_1\X_1^T - \Y_1 \Y_1^T\|_F + \frac{1}{\sqrt{2}}\|\X_2\X_2^T - \Y_2 \Y_2^T\|_F.
\end{equation}

\subsection{Few-shot learning}
We now expand on the methodological details of the baseline methods for few-shot learning and report further results including standard deviations.

\paragraph{Prototypical networks}
Prototypical networks~\cite{snell2017prototypical} are a classical few-shot architecture that uses averages for class representatives and Euclidean distance for distances between representatives and queries. Specifically, a prototype for class $c$ is
\begin{equation}
    \mathbf{q} = \frac{1}{s} \sum_{i=1}^s f_\Theta(\bm{x}_{c,i})
\end{equation}
and the distance between a query point, $f_{\Theta}(\bm{x})$, is
\begin{equation}
    \|\mathbf{q} - f_{\Theta}(\bm{x})\|_2^2.
\end{equation}
In experiments, we refer to this method as `Euc.'

\paragraph{Subspace classifiers}
Subspace classifiers from adaptive subspace networks~\cite{simon2020adaptive} use subspace representatives and measure distances between subspace representatives and queries via projections of the queries onto the subspace representatives. Although the original work suggests mean subtraction before computing subspace representatives and for classification, we notice that there is no mean-subtraction in the code provided on \href{https://github.com/chrysts/dsn_fewshot/blob/master/Resnet12/models/classification_heads.py}{GitHub}. Therefore, we summarize the model used on GitHub as
\begin{equation}
    \tilde{\bm{X}}_c = \left[ f_\Theta(\bm{x}_{c,1})|f_\Theta(\bm{x}_{c,2})|\cdots| f_\Theta(\bm{x}_{c,s}) ,\right]
\end{equation}
\begin{equation}
    \mathbf{U}_c \boldsymbol{\Sigma}_c \mathbf{V}_c^\top = \tilde{\bm{X}}_c,
\end{equation}
\begin{equation}
    \mathbf{Q}_c = \mathbf{U}_c(1:\mathrm{end},1:s-1).
\end{equation}
We say that the span of the columns of $\Q_c$ serves as the subspace representative for class $c$. This can be seen as a mapping of a set feature space representation of the shots from one class to $Gr(s-1,n)$ via the SVD. The distance between a query $f_{\Theta}(\bm{x})$ and class $c$ is 
\begin{equation}
    \|f_{\Theta}(\bm{x}) - \mathbf{Q}_c \mathbf{Q}_c^\top f_{\Theta}(\bm{x})\|_F^2.
\end{equation}
This is the residual of the projection of a query point onto the subspace representative for class $c$.

\paragraph{Stacking features}
Our application of flag classifiers uses an alexnet backbone $f_\Theta = f^{(2)}_\Theta \circ f^{(1)}_\Theta$. Given a sample $\bm{f}$, flag classifiers leverage both the information extracted by $f_\Theta$ \emph{and} $f^{(1)}_\Theta$. This is already an advantage over the baseline methods because flag classifiers see more features. Therefore, we modify prototypical network and subspace classifiers for a fair baseline to flag nets. Specifically, we replace $f_\Theta(\bm{x})$ with 
\begin{equation}
    \begin{bmatrix}
        f^{(1)}_\Theta(\bm{x})\\
        f_\Theta(\bm{x})
    \end{bmatrix}.
\end{equation}
This doubles the dimension of the extracted feature space and thereby exposes these algorithms to problems like the curse of dimensionality. Additionally, it assumes \emph{no order} on the features extracted by $f_\Theta$ and $f^{(1)}_\Theta$ therein not respecting the natural hierarchy of the alexnet feature extractor.

\paragraph{Further results}
We provide the classification accuracies along with standard deviations over $20$ random trials in~\cref{tab:fewshot_app1,tab:fewshot_app2}. 
\setlength{\tabcolsep}{1pt}
\begin{table}[ht]
    \centering    \caption{\emph{Classification accuracy ($\uparrow$)} with $s$ shots, $5$ ways, and $100$ evaluation tasks each containing $10$ query images, averaged over $20$ random trials. Flag types for `Flag' are $(s-1,2(s-1))$ and the subspace dimension is $s-1$. Baselines see stacked features from both $f^{(1)}_\Theta$ and $f_\Theta$.}
    \label{tab:fewshot_app1}
    \resizebox{\columnwidth}{!}{    
    \footnotesize
    \begin{tabular}{@{\hskip 4pt}l@{\hskip 4pt}l@{\hskip 4pt}c@{\hskip 7pt} c @{\hskip 7pt}c}
    \toprule
    $s$ & Dataset & Flag & Euc. & Subsp. \\
    \midrule
    \multirow{3}{*}{3} & EuroSat & $\bm{77.7} \pm 1.0$ & $76.7 \pm 1.0$ & $77.6 \pm 1.0$ \\
     & CIFAR-10 & $\bm{59.6} \pm 1.0$ & $58.6 \pm 0.9$ & $\bm{59.6} \pm 1.0$ \\
     & Flowers102 & $\bm{90.2} \pm 0.7$ & $88.2 \pm 1.0$ & $\bm{90.2} \pm 0.7$ \\
    \cline{1-5}
    \multirow{3}{*}{5} & EuroSat & $\bm{81.8} \pm 0.7$ & $80.7 \pm 0.8$ & $\bm{81.8} \pm 0.7$ \\
     & CIFAR-10 & $\bm{65.2} \pm 0.9$ & $\bm{65.2} \pm 0.9$ & $\bm{65.2} \pm 0.9$ \\
     & Flowers102 & $\bm{93.2} \pm 0.5$ & $91.4 \pm 0.6$ & $\bm{93.2} \pm 0.5$ \\
    \cline{1-5}
    \multirow{3}{*}{7} & EuroSat & $\bm{83.9} \pm 0.8$ & $82.6 \pm 0.8$ & $83.8 \pm 0.8$ \\
     & CIFAR-10 & $68.0 \pm 0.7$ & $\bm{68.6} \pm 0.8$ & $68.1 \pm 0.7$ \\
     & Flowers102 & $\bm{94.5} \pm 0.5$ & $92.7 \pm 0.5$ & $\bm{94.5} \pm 0.5$ \\
    \bottomrule
    \end{tabular}}
\end{table}

\begin{table}[ht]
    \centering
    \caption{\emph{Classification accuracy ($\uparrow$)} with $s$ shots, $5$ ways, and $100$ evaluation tasks each containing $10$ query images, averaged over $20$ random trials. Flag types for `Flag' are $(s-1,2(s-1))$ and the subspace dimension is $s-1$. Baselines see features only from $f_\Theta$.}
    \label{tab:fewshot_app2}
    \resizebox{\columnwidth}{!}{
    \footnotesize
    \begin{tabular}{@{\hskip 4pt}l@{\hskip 4pt}l@{\hskip 4pt}c@{\hskip 7pt} c @{\hskip 7pt}c}
        \toprule
        $s$ & Dataset & Flag & Euc. & Subsp. \\
        \midrule
        \multirow{3}{*}{3} & EuroSat & $\bm{77.7} \pm 1.0$ & $75.9 \pm 0.9$ & $76.8 \pm 1.1$ \\
         & CIFAR-10 & $\bm{59.6} \pm 1.0$ & $58.4 \pm 0.8$ & $58.5 \pm 0.9$ \\
         & Flowers102 & $\bm{90.2} \pm 0.7$ & $87.9 \pm 0.9$ & $88.8 \pm 0.8$ \\
        \cline{1-5}
        \multirow{3}{*}{5} & EuroSat & $\bm{81.8} \pm 0.7$ & $79.8 \pm 0.8$ & $80.8 \pm 0.8$ \\
         & CIFAR-10 & $\bm{65.2} \pm 0.9$ & $64.5 \pm 1.0$ & $63.8 \pm 0.9$ \\
         & Flowers102 & $\bm{93.2} \pm 0.5$ & $91.1 \pm 0.6$ & $92.0 \pm 0.5$ \\
        \cline{1-5}
        \multirow{3}{*}{7} & EuroSat & $\bm{83.9} \pm 0.8$ & $81.7 \pm 0.8$ & $82.9 \pm 0.8$ \\
         & CIFAR-10 & $\bm{68.0} \pm 0.7$ & $67.9 \pm 0.8$ & $66.7 \pm 0.7$ \\
         & Flowers102 & $\bm{94.5} \pm 0.5$ & $92.3 \pm 0.5$ & $93.4 \pm 0.5$ \\
        \bottomrule
    \end{tabular}}
    \vspace{-3mm}
\end{table}

\flushcolsend

\end{document}